\def\BState{\State\hskip-\ALG@thistlm}
\def\squareforqed{\hbox{\rlap{$\sqcap$}$\sqcup$}}
\def\qed{\ifmmode\squareforqed\else{\unskip\nobreak\hfil
\penalty50\hskip1em\null\nobreak\hfil\squareforqed
\parfillskip=0pt\finalhyphendemerits=0\endgraf}\fi}
\newtheorem{theorem}{Theorem}
\newtheorem{definition}{Definition}
\newtheorem{lemma}{Lemma}
\newtheorem{corollary}{Corollary}
\newtheorem{remark}{Remark}
\newtheorem{assumption}{Assumption}
\newcommand{\real}[1]{\mathbb{R}^{{#1}}}
\newcommand{\innerprod}[1]{\langle {#1} \rangle}
\newcommand{\norm}[1]{\lVert {#1} \rVert}
\newcommand{\Norm}[1]{\left\lVert {#1} \right\rVert}
\newcommand{\prox}[2][{}]{\mathbf{prox}^{#1}_{#2}}
\begin{document}

\title{Asynchronous Stochastic Proximal Methods for \\ Nonconvex Nonsmooth Optimization}
% \interdisplaylinepenalty=2500

\author{
Rui Zhu$^1$\thanks{rzhu3@ualberta.ca},
Di Niu$^1$\thanks{dniu@ualberta.ca},
Zongpeng Li$^2$\thanks{zongpeng@whu.edu.cn} \\
$^1$ Department of Electrical and Computer Engineering, University of Alberta\\
$^2$ School of Computer Science, Wuhan University
}

\maketitle

%!TEX root = ../main.tex

\begin{abstract}

We study stochastic algorithms for solving nonconvex optimization problems with a convex yet possibly nonsmooth regularizer, which find wide applications in many practical machine learning applications. However, compared to asynchronous parallel stochastic gradient descent (AsynSGD), an algorithm targeting smooth optimization, the understanding of the behavior of stochastic algorithms for nonsmooth regularized optimization problems is limited, especially when the objective function is nonconvex. To fill this theoretical gap, in this paper, we propose and analyze asynchronous parallel stochastic proximal gradient (Asyn-ProxSGD) methods for nonconvex problems. We establish an ergodic convergence rate of $O(1/\sqrt{K})$ for the proposed Asyn-ProxSGD, where $K$ is the number of updates made on the model, matching the convergence rate currently known for AsynSGD (for smooth problems). To our knowledge, this is the first work that provides convergence rates of asynchronous parallel ProxSGD algorithms for nonconvex problems. Furthermore, our results are also the first to show the convergence of any stochastic proximal methods without assuming an increasing batch size or the use of additional variance reduction techniques. We implement the proposed algorithms on Parameter Server and demonstrate its convergence behavior and near-linear speedup, as the number of workers increases,  on two real-world datasets.

%Stochastic proximal gradient methods and their asynchronous parallel implementation are a vital to solving this type of problems, especially in the presence of massive data volumes and growing model complexity.
%Recent efforts have been made to develop asynchronous parallel stochastic proximal gradient (Asyn-ProxSGD). However, existing theories focus on the convex case, and results for the nonconvex case are limited. The convergence to stationary points under constant minibatch size for proximal stochastic gradient methods is still unknown, and this work aims to fill this gap in various distributed clusters. In particular, we consider both full model updates for better data consistency, and blockwise updates for higher efficiency. For both schemes, we establish an ergodic convergence rate $O(1/\sqrt{K})$ to stationary points for proximal stochastic gradient algorithms with constant minibatch size, and our result matches the same order with SGD. We implement the proposed algorithms on the Parameter Server framework and demonstrate its convergence and near-linear speedup performance, as the number of workers increases.
\end{abstract}
%!TEX root = ../main.tex

\section{Introduction}
\label{sec:intro}

With rapidly growing data volumes and variety, the need to scale up machine learning has sparked broad interests in developing efficient parallel optimization algorithms. 
A typical parallel optimization algorithm usually decomposes the original problem into multiple subproblems, each handled by a worker node. Each worker iteratively downloads the global model parameters and computes its local gradients to be sent to the master node or servers for model updates. Recently, asynchronous parallel optimization algorithms  \citep{recht2011hogwild,li2014communication,lian2015asynchronous}, exemplified by the Parameter Server architecture \citep{li2014scaling}, have been widely deployed in industry to solve practical large-scale machine learning problems. Asynchronous algorithms can largely reduce overhead and speedup training, since each worker may individually perform model updates in the system without synchronization.
Another trend to deal with large volumes of data is the use of \emph{stochastic} algorithms.  
As the number of training samples $n$ increases, the cost of updating the model $x$ taking into account all error gradients becomes prohibitive. To tackle this issue, stochastic algorithms make it possible to update $x$ using only a small subset of all training samples at a time.

%which also enables the wide application of asynchronous distributed optimization, exemplified by the popular Parameter Server architecture.

Stochastic gradient descent (SGD) is one of the first algorithms widely implemented in an asynchronous parallel fashion; its convergence rates and speedup properties have been analyzed for both convex \citep{agarwal2011distributed,mania2017} and nonconvex \citep{lian2015asynchronous} optimization problems.
Nevertheless, SGD is mainly applicable to the case of smooth optimization, and yet is not suitable for problems with a \emph{nonsmooth} term in the objective function, e.g., an $\ell_1$ norm regularizer. In fact, such nonsmooth regularizers are commonplace in many practical machine learning problems or constrained optimization problems. In these cases, SGD becomes ineffective, as it is hard to obtain gradients for a nonsmooth objective function.

% \emph{first}, divide problem into several parts; \emph{second}, each worker node computes a subproblem in local; and, \emph{finally} all worker nodes share their results with others via a centralized parameter server. Here \emph{parameter server} can be one or multiple nodes that stores the variable $x$. To share solutions of worker nodes, each worker node will push their solution to parameter server and pull the aggregated solution from parameter server.

We consider the following nonconvex regularized optimization problem:
{%\small
\begin{equation}
\begin{split}
	\mathop{\min}_{x\in \real{d}} &\quad \Psi(x):= f(x) + h(x),
\end{split}
\label{eq:original}
\end{equation}
}
where $f(x)$ takes a finite-sum form of $f(x) := \frac{1}{n}\sum_{i=1}^n f_i(x)$, and each $f_i(x)$ is a smooth (but not necessarily convex) function. The second term $h(x)$ is a convex (but \emph{not necessarily smooth}) function. This type of problems is prevalent in machine learning, as exemplified by deep learning with regularization \citep{dean2012large,chen2015mxnet,zhang2015deep}, LASSO \citep{tibshirani2005sparsity}, sparse logistic regression \citep{liu2009large}, robust matrix completion \citep{xu2010robust,sun2015guaranteed}, and sparse support vector machine (SVM) \citep{friedman2001elements}. In these problems, $f(x)$ is a loss function of model parameters $x$, possibly in a nonconvex form (e.g., in neural networks), while $h(x)$ is a convex regularization term, which is, however, possibly \emph{nonsmooth}, e.g., the $\ell_1$ norm regularizer.

Many classical deterministic (non-stochastic) algorithms are available to solve problem \eqref{eq:original}, including the proximal gradient (ProxGD) method \citep{parikh2014proximal} and its accelerated variants \citep{li2015accelerated} as well as the alternating direction method of multipliers (ADMM) \citep{hong2016convergence}. These methods leverage the so-called \emph{proximal operators} \citep{parikh2014proximal} to handle the nonsmoothness in the problem. Although implementing these deterministic algorithms in a \emph{synchronous} parallel fashion is straightforward, extending them to asynchronous parallel algorithms is much more complicated than it appears. In fact, existing theory on the convergence of asynchronous proximal gradient (PG) methods for nonconvex problem \eqref{eq:original} is quite limited. An asynchronous parallel proximal gradient method has been presented in \citep{li2014communication} and has been shown to converge to stationary points for nonconvex problems. However, \citep{li2014communication} has essentially proposed a non-stochastic algorithm and has not provided its convergence rate.

In this paper, we propose and analyze an asynchronous parallel \emph{proximal stochastic gradient descent} (ProxSGD) method for solving the nonconvex and nonsmooth problem \eqref{eq:original}, with provable convergence and speedup guarantees. The analysis of ProxSGD has attracted much attention in the community recently.
Under the assumption of an \emph{increasing} minibatch size used in the stochastic algorithm, the non-asymptotic convergence of ProxSGD to stationary points has been shown in \citep{ghadimi2016mini} for problem \eqref{eq:original} with a convergence rate of $O(1/\sqrt{K})$, $K$ being the times the model is updated. Moreover, additional variance reduction techniques have been introduced \citep{reddi2016proximal} to guarantee
the convergence of ProxSGD, which is different from the stochastic method we discuss here. The stochastic algorithm considered in this paper assumes that each worker selects a minibatch of randomly chosen training samples to calculate the gradients at a time, which is a scheme widely used in practice.
To the best of our knowledge, the convergence behavior of ProxSGD---under a \emph{constant} minibatch size without variance reduction---is still unknown (even for the synchronous or sequential version).
%Thus, there is a fundamental gap in our understanding of stochastic methods for nonsmooth nonconvex problems.

Our main contributions are summarized as follows:
\begin{itemize}
	\item We propose asynchronous parallel ProxSGD (a.k.a. Asyn-ProxSGD) and prove that it can converge to stationary points of nonconvex and nonsmooth problem \eqref{eq:original} with an ergodic convergence rate of $O(1/\sqrt{K})$, where $K$ is the number of times that the model $x$ is updated.  This rate matches the convergence rate known for asynchronous SGD. The latter, however, is suitable only for smooth problems. To our knowledge, this is the first work that offers convergence rate guarantees for any stochastic proximal methods in an asynchronous parallel setting.
		
	\item Our result also suggests that the sequential (or synchronous parallel) ProxSGD can converge to stationary points of problem \eqref{eq:original}, with a convergence rate of $O(1/\sqrt{K})$. To the best of our knowledge, this is also the first work that provides convergence rates of any \emph{stochastic} algorithm for nonsmooth problem \eqref{eq:original} under a \emph{constant} batch size, while prior literature on such stochastic proximal methods assumes an increasing batch size or relies on variance reduction techniques.

	\item We provide a linear speedup guarantee as the number of workers increases, provided that the number of workers is bounded by $O({K}^{1/4})$. This result has laid down a theoretical ground for the scalability and performance of our Asyn-ProxSGD algorithm in practice.
\end{itemize}

%!TEX root = ../main.tex

\section{Preliminaries}
\label{sec:prelim}

% In this section, we first introduce some notations to be used throughout the paper. Then we introduce the stochastic optimization problem to be studied. Finally, we introduce proximal operators and enumerate fundamental assumptions made in the model.

% \textbf{Notations}:
In this paper, we use $f(x)$ as the one defined in \eqref{eq:original}, and $F(x;\xi)$ as a function whose stochastic nature comes from the random variable $\xi$ representing a random index selected from the training set $\{1,\ldots, n\}$. We use $\norm{x}$ to denote the $\ell_2$ norm of the vector $x$, and $\innerprod{x, y}$ to denote the inner product of two vectors $x$ and $y$. We use $g(x)$ to denote the ``true'' gradient $\nabla f(x)$ and use $G(x;\xi)$ to denote the stochastic gradient $\nabla F(x;\xi)$ for a function $f(x)$. 
For a random variable or vector $X$, let $\mathbb{E}[X|\mathcal{F}]$ be the conditional expectation of $X$ w.r.t. a sigma algebra $\mathcal{F}$. We denote $\partial h(x)$ as the \emph{subdifferential} of $h$. A point $x$ is a critical point of $\Phi$, iff $0 \in \nabla f(x) + \partial h(x)$. 

\subsection{Stochastic Optimization Problems}
%\subsection{Stochastic optimization problems}
In this paper, we consider the following \emph{stochastic} optimization problem instead of the original deterministic version \eqref{eq:original}:
\begin{equation}
\begin{split}
	\mathop{\min}_{x\in \real{d}} &\quad \Psi(x):= \mathbb{E}_\xi [F(x; \xi)] + h(x),
\end{split}
\label{eq:stochastic}
\end{equation}
where the stochastic nature comes from the random variable $\xi$, which in our problem settings, represents a random index selected from the training set $\{1, \ldots, n\}$. Therefore, \eqref{eq:stochastic} attempts to minimize the expected loss of a random training sample plus a regularizer $h(x)$. In this work, we assume the function $h$ is proper, closed and convex, yet \emph{not necessarily smooth}.

\subsection{Proximal Gradient Descent}

The proximal operator is fundamental to many algorithms to solve problem \eqref{eq:original} as well as its stochastic variant \eqref{eq:stochastic}.

\begin{definition}[Proximal operator]
	The proximal operator $\prox{}$ of a point $x \in \real{d}$ under a proper and closed function $h$ with parameter $\eta > 0$ is defined as:
\begin{equation}
	\prox{\eta h}(x) = \mathop{\arg \min}_{y \in \real{d}} \left\{h(y) + \frac{1}{2\eta} \norm{y-x}^2\right\}.
\end{equation}
\end{definition}
%We focus on \emph{proximal gradient} methods in this work. 
In its vanilla version, \emph{proximal gradient descent} performs the following iterative updates:
\begin{equation*}
	x^{k+1} \gets \prox{\eta_k h}(x^{k} - \eta_k \nabla f(x^k)),
\end{equation*}
for $k=1,2,\ldots$, where $\eta_k > 0$ is the step size at iteration $k$. 

To solve stochastic optimization problem \eqref{eq:stochastic}, we need a variant called \emph{proximal stochastic gradient descent} (ProxSGD), with its update rule at each (synchronized) iteration $k$ given by
\begin{equation}
	x^{k+1} \gets \prox{\eta_k h}\left( x^{k} - \frac{\eta_k}{N}\sum_{\xi \in \Xi_k}\nabla F(x^k; \xi) \right),
	\label{eq:proxsgd_step}
\end{equation}
where $N:=|\Xi_k|$ is the mini-batch size. In ProxSGD, the aggregate gradient $\nabla f$ over all the samples is replaced by the gradients from a random subset of training samples, denoted by $\Xi_k$ at iteration $k$. Since $\xi$ is a random variable indicating a random index in $\{1,\ldots, n\}$, $F(x; \xi)$ is a random loss function for the random sample $\xi$, such that $f(x) := \mathbb{E}_\xi [F(x; \xi)]$.

\subsection{Parallel Stochastic Optimization}

Recent years have witnessed rapid development of parallel and distributed computation frameworks for large-scale machine learning problems. One popular architecture is called \emph{parameter server} \citep{dean2012large,li2014scaling}, which consists of some worker nodes and server nodes. In this architecture, one or multiple master machines play the role of parameter servers, which maintain the model $x$. Since these machines serve the same purpose, we can simply treat them as one \emph{server node} for brevity. All other machines are \emph{worker nodes} that communicate with the server for training machine learning models. In particular, each worker has two types of requests: \textbf{pull} the current model $x$ from the server, and \textbf{push} the computed gradients to the server.

Before proposing an asynchronous Proximal SGD algorithm in the next section, let us first introduce its \emph{synchronous} version.  Let us use an example to illustrate the idea. Suppose we execute ProxSGD with a mini-batch of 128 random samples on 8 workers. We can let each worker randomly take 16 samples, and compute a summed gradient on these 16 samples, and push it to the server. In the synchronous case, the server will finally receive 8 summed gradients (containing information of all 128 samples) in each iteration. The server then updates the model by performing the proximal gradient descent step. In general, if we have $m$ workers, each worker will be assigned $N/m$ random samples in an iteration.

Note that in this scenario, all workers contribute to the computation of the sum of gradients on $N$ random samples in parallel, which corresponds to \emph{data parallelism} in the literature (e.g., \citep{agarwal2011distributed,ho2013more}). Another type of parallelism is called \emph{model parallelism}, in which each worker uses all $N$ random samples in the batch to compute a partial gradient on a specific block of $x$ (e.g., \citep{recht2011hogwild,pan2016cyclades}). Typically, data parallelism is more suitable when $n \gg d$, i.e., large dataset with moderate model size, and model parallelism is more suitable when $d \gg n$. We focus on data parallelism.
%!TEX root = ../main.tex

\begin{algorithm}[thpb]
\caption{Asyn-ProxSGD: Asynchronous Proximal Stochastic Gradient Descent}
\label{alg:asyn-prox-sgd}
\underline{\textbf{Server executes:}}
\begin{algorithmic}[1]
	\State Initialize $x^0$.
	\State Initialize $G \gets 0$. \Comment {Gradient accumulator}
	\State Initialize $s \gets 0$. \Comment {Request counter}
	\Loop
		\If {\texttt{Pull Request} from worker $j$ is received:}
			\State Send $x$ to worker $j$.
		\EndIf
		\If {\texttt{Push Request} (gradient $G_j$) from worker $j$ is received:}
			\State $s \gets s + 1$.
			\State $G \gets G + \frac{1}{N}\cdot G_j$.
			\If {$s=m$}
				\State $x \gets \prox{\eta h}(x - \eta G)$.
				\State $s \gets 0$.
				\State $G \gets 0$.
			\EndIf
		\EndIf
	\EndLoop
\end{algorithmic}
\underline{\textbf{Worker $j$ asynchronously performs:}}
\begin{algorithmic}[1]
\State Pull $x^0$ to initialize.
\For {$t=0,1,\ldots$}
	\State Randomly choose $N/m$ training samples indexed by $\xi_{t,1}(j),\ldots,\xi_{t,N/m}(j)$.
	\State Calculate $G^t_j = \sum_{i=1}^{N} \nabla F(x^t; \xi_{t, i}(j))$.
	\State Push $G^t_j$ to the server.
	\State Pull the current model $x$ from the server: $x^{t+1} \gets x$.
\EndFor
\end{algorithmic}
\end{algorithm}
\vspace{-3mm}

\section{Asynchronous Proximal Gradient Descent}
\label{sec:aspg}

We now present our \emph{asynchronous proximal gradient descent} (Asyn-ProxSGD) algorithm, which is the main contribution in this paper. In the asynchronous algorithm, different workers may be in different local iterations due to random delays in computation and communication.

For ease of presentation, let us first assume each worker uses only one random sample at a time to compute its stochastic gradient, which naturally generalizes to using a mini-batch of random samples to compute a stochastic gradient. In this case, each worker will independently and asynchronously repeat the following steps:
\begin{itemize}
\item Pull the latest model $x$ from the server;
\item Calculate a gradient $\tilde{G}(x;\xi)$ based on a random sample $\xi$ locally;
\item Push the gradient $\tilde{G}(x; \xi)$ to the server.
\end{itemize}
Here we use $\tilde{G}$ to emphasize that the gradient computed on workers \emph{may be delayed}. For example, all workers but worker $j$ have completed their tasks of iteration $t$, while worker $j$ still works on iteration $t-1$. In this case, the gradient $\tilde{G}$ is not computed based on the current model $x^t$ but from a delayed one $x^{t-1}$.

In our algorithm, the server will perform an averaging over the received sample gradients as long as $N$ gradients are received and perform an proximal gradient descent update on the model $x$, no matter where these $N$ gradients come from; as long as $N$ gradients are received, the averaging is performed. This means that it is possible that the server may have received multiple gradients from one worker while not receiving any from another worker. 

In general, when each mini-batch has $N$ samples, and each worker processes $N/m$ random samples to calculate a stochastic gradient to be pushed to the server, the proposed Asyn-ProxSGD algorithm is described in Algorithm~\ref{alg:asyn-prox-sgd} leveraging a parameter server architecture. The server maintains a counter $s$. Once $s$ reaches $m$, the server has received gradients that contain information about $N$ random samples (no matter where they come from) and will perform a proximal model update.
%$Due to this property, the above implementation for one random sample per worker at a time can be easily generalized to the case that each worker repeatedly selects a batch of $N$ random samples to compute a locally averaged stochastic gradient to be sent to the server for aggregation, which will still lead to the same global perspective as in Algorithm~\ref{alg:apsgd-global}.

%Since gradients are computed from $N$ samples, we can simply ask these worker nodes to calculate gradients: for node $i$, it \emph{pulls} the latest $x^k$ from the PS node, then calculate $G^{k, i} := G(x^k;\xi_{k,i})$ and finally \emph{pushes} to the PS node. Note that, the actions \emph{pull} and \emph{push} are two primitives provided by parameter server framework \citep{li2014scaling}.

%!TEX root = ../main.tex

\section{Convergence Analysis}
\label{sec:theory}

% \subsection{A Global Perspective of Asyn-ProxSGD}
To facilitate the analysis of Algorithm~\ref{alg:asyn-prox-sgd}, we rewrite it in an equivalent global view (from the server's perspective), as described in Algorithm~\ref{alg:apsgd-global}. In this algorithm, we use an iteration counter $k$ to keep track of how many times the model $x$ has been updated on the server; $k$ increments every time a push request (model update request) is completed. Note that such a counter $k$ is \emph{not} required by workers to compute gradients and is different from the counter $t$ in Algorithm~\ref{alg:asyn-prox-sgd}---$t$ is maintained by each worker to count how many sample gradients have been computed locally.

In particular, for every $N$ stochastic sample gradients received, the server simply aggregates them by averaging:  
{
\begin{equation}
	\tilde{G}^k := \frac{1}{N}\sum_{i=1}^N \nabla F(x^{k-\tau(k,i)}; \xi_{k, i}),
	\label{eq:grad_aspg}
\end{equation}
}
where $\tau(k, i)$ indicates that the stochastic gradient $\nabla F(x^{k-\tau(k,i)}; \xi_{k, i})$ received at iteration $k$ could have been computed based on an older model $x^{k-\tau(k,i)}$ due to communication delay and asynchrony among workers. Then, the server updates $x^k$ to $x^{k+1}$ using proximal gradient descent.

\begin{algorithm}[htpb]
\caption{Asyn-ProxSGD (from a Global Perspective)}
\label{alg:apsgd-global}
\begin{algorithmic}[1]
\State Initialize $x^1$.
\For {$k=1,\ldots, K$}
\State {Randomly select $N$ training samples indexed by $\xi_{k, 1}, \ldots, \xi_{k, N}$.}
\State {Calculate the averaged gradient $\tilde{G}^k$ according to \eqref{eq:grad_aspg}.}
\State {$x^{k+1} \gets \prox{\eta_k h}(x^{k} - \eta_k \tilde{G}^{k})$.}
\EndFor
\end{algorithmic}
\end{algorithm}

\subsection{Assumptions and Metrics}
% In this work, we assume the function $h$ is proper, closed and convex, yet \emph{not necessarily smooth}. 
%\subsection{Assumptions}
We make the following assumptions for convergence analysis. We assume that $f(\cdot)$ is a smooth function with the following properties:
\begin{assumption}[Lipschitz Gradient]
	For function $f$ there are Lipschitz constants $L>0$ such that
	{
	\begin{equation}
		\norm{\nabla f(x) - \nabla f(y)} \leq L \norm{x - y}, \forall x, y \in \real{d}.
	\end{equation}
	}
\label{asmp:smooth}
\vspace{-2mm}
\end{assumption}
As discussed above, assume that $h$ is a proper, closed and convex function, which is yet not necessarily smooth. 
If the algorithm has been executed for $k$ iterations, we let $\mathcal{F}_k$ denote the set that consists of all the samples used up to iteration $k$. Since $\mathcal{F}_k \subseteq \mathcal{F}_{k'}$ for all $k \leq k'$, the collection of all such $\mathcal{F}_k$ forms a \emph{filtration}. Under such settings, we can restrict our attention to those stochastic gradients with an unbiased estimate and bounded variance, which are common in the analysis of \emph{stochastic} gradient descent or \emph{stochastic} proximal gradient algorithms, e.g., \citep{lian2015asynchronous,ghadimi2016mini}.
\begin{assumption}[Unbiased gradient]
	For any $k$, we have $\mathbb{E}[G_k | \mathcal{F}_{k}] = g_k$.
	\label{asmp:unbias_grad}
\end{assumption}
\begin{assumption}[Bounded variance]
	The variance of the stochastic gradient is bounded by $\mathbb{E}[\norm{G(x;\xi) - \nabla f(x)}^2] \leq \sigma^2$.
	\label{asmp:var_grad}
\end{assumption}

We make the following assumptions on the delay and independence:
\begin{assumption}[Bounded delay]
	All delay variables $\tau(k,i)$ are bounded by $T$: $\max_{k, i} \tau(k,i) \leq T$.
	\label{asmp:bound1}
\end{assumption}
\begin{assumption}[Independence]
	All random variables $\xi_{k, i}$ for all $k$ and $i$ in Algorithm~\ref{alg:apsgd-global} are mutually independent.
	\label{asmp:indie1}
\end{assumption}

The assumption of bounded delay is to guarantee that gradients from workers should not be too old. Note that the maximum delay $T$ is  roughly \emph{proportional to the number of workers} in practice. This is also known as \emph{stale synchronous parallel} \citep{ho2013more} in the literature. Another assumption on independence can be met by selecting samples with \emph{replacement}, which can be implemented using some distributed file systems like HDFS \citep{borthakur2008hdfs}. These two assumptions are common in convergence analysis for asynchronous parallel algorithms, e.g., \citep{lian2015asynchronous,davis2016sound}.

\subsection{Theoretical Results}
We present our main convergence theorem as follows:
\begin{theorem}
	If Assumptions~\ref{asmp:bound1} and \ref{asmp:indie1} hold and the step length sequence $\{\eta_k\}$ in Algorithm~\ref{alg:apsgd-global} satisfies
	{
	\begin{equation}
		\eta_k \leq \frac{1}{16L},\quad 6\eta_k L^2 T \sum_{l=1}^T \eta_{k+l} \leq 1,
	\end{equation}
	}
	for all $k=1,2,\ldots, K$, we have the following ergodic convergence rate for Algorithm ~\ref{alg:apsgd-global}:
	{
	\begin{equation}
	\begin{split}
&\quad \frac{\sum_{k=1}^K (\eta_k - 8L\eta_k^2) \mathbb{E}[\norm{P(x^k, g^k, \eta_k)}^2]}{\sum_{k=1}^K (\eta_k - 8L\eta_k^2) }  \\
&\leq \frac{8(\Psi(x^1) - \Psi(x^*))}{\sum_{k=1}^K \eta_k-8L\eta_k^2}  + \frac{\sum_{k=1}^K \left(8 L\eta_k^2 + 12\eta_kL^2T \sum_{l=1}^{T}\eta_{k-l}^2 \right) \sigma^2}{N\sum_{k=1}^K (\eta_k - 8L\eta_k^2)},
\end{split}
	\end{equation}
	}
	where the expectation is taken in terms of all random variables in Algorithm~\ref{alg:apsgd-global}.
	\label{thm:aspg_convergence}
\end{theorem}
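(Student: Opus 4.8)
\section*{Proof proposal for Theorem~\ref{thm:aspg_convergence}}

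The plan is to combine the standard descent estimate for composite (proximal) methods with a careful accounting of the bias introduced by the stale gradients. Throughout write $g^k := \nabla f(x^k)$, $\delta^k := \tilde{G}^k - g^k$, and abbreviate by $P^k := P(x^k, g^k, \eta_k) = \frac{1}{\eta_k}\bigl(x^k - \prox{\eta_k h}(x^k - \eta_k g^k)\bigr)$ the true-gradient proximal gradient mapping. It will be convenient to also set $\hat{g}^k := \frac{1}{N}\sum_{i=1}^N \nabla f(x^{k-\tau(k,i)})$, the conditional mean of $\tilde{G}^k$ given the model history, so that $\delta^k = (\tilde{G}^k - \hat{g}^k) + (\hat{g}^k - g^k)$ splits into a zero-mean stochastic part and a deterministic delay part.

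\textbf{Step 1 (per-iteration descent).} Using Assumption~\ref{asmp:smooth} (the descent lemma for $f$) together with the optimality condition of the update $x^{k+1}=\prox{\eta_k h}(x^k - \eta_k\tilde{G}^k)$ --- which yields $-\tfrac{1}{\eta_k}(x^{k+1}-x^k)-\tilde{G}^k \in \partial h(x^{k+1})$ and hence, by convexity of $h$, $h(x^{k+1}) \le h(x^k) - \innerprod{\tfrac{1}{\eta_k}(x^{k+1}-x^k)+\tilde{G}^k,\ x^{k+1}-x^k}$ --- one obtains
\[
\Psi(x^{k+1}) \le \Psi(x^k) - \Bigl(\tfrac{1}{\eta_k}-\tfrac{L}{2}\Bigr)\norm{x^{k+1}-x^k}^2 - \innerprod{\delta^k,\ x^{k+1}-x^k}.
\]
The crux is to convert the leftover quadratic into $\norm{P^k}^2$ while controlling the cross term. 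Writing $x^{k+1}-x^k = (x^{k+1}-\bar{x}^{k+1}) - \eta_k P^k$ with $\bar{x}^{k+1}:=\prox{\eta_k h}(x^k - \eta_k g^k)$, nonexpansiveness of the prox gives $\norm{x^{k+1}-\bar{x}^{k+1}}\le \eta_k\norm{\delta^k}$ and, via $(a-b)^2 \ge \tfrac12 a^2 - b^2$, $\norm{x^{k+1}-x^k}^2 \ge \tfrac12\eta_k^2\norm{P^k}^2 - \eta_k^2\norm{\delta^k}^2$. Since $\bar{x}^{k+1}$ and $\hat{g}^k$ are measurable with respect to the information available before drawing the fresh samples $\xi_{k,\cdot}$, Assumptions~\ref{asmp:unbias_grad} and \ref{asmp:indie1} give $\mathbb{E}[\innerprod{\tilde{G}^k-\hat{g}^k,\ \bar{x}^{k+1}-x^k}]=0$; so upon taking expectations only the delay-bias term $\innerprod{\hat{g}^k-g^k,\ x^{k+1}-x^k}$ and the term $\innerprod{\tilde{G}^k-\hat{g}^k,\ x^{k+1}-\bar{x}^{k+1}}$ survive, and both are absorbed by Young's inequality at the cost of $O(\eta_k)\,\mathbb{E}[\norm{\delta^k}^2]$ terms. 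Combined with $\eta_k\le\tfrac1{16L}$, this produces an inequality of the shape $\mathbb{E}[\Psi(x^{k+1})] \le \mathbb{E}[\Psi(x^k)] - (\eta_k-8L\eta_k^2)\,\mathbb{E}[\norm{P^k}^2] + O(L\eta_k^2)\,\mathbb{E}[\norm{\delta^k}^2]$ with the precise constants dictated by the step-size restriction.

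\textbf{Step 2 (bounding $\mathbb{E}[\norm{\delta^k}^2]$).} By Assumptions~\ref{asmp:var_grad} and \ref{asmp:indie1}, averaging $N$ independent sample errors gives $\mathbb{E}[\norm{\tilde{G}^k-\hat{g}^k}^2]\le \sigma^2/N$. For the delay part, Assumption~\ref{asmp:smooth} and $\tau(k,i)\le T$ (Assumption~\ref{asmp:bound1}) yield $\norm{\hat{g}^k-g^k} \le \tfrac{L}{N}\sum_{i=1}^N\sum_{j=k-\tau(k,i)}^{k-1}\norm{x^{j+1}-x^j} \le L\sum_{l=1}^{T}\norm{x^{k-l+1}-x^{k-l}}$, hence $\norm{\hat{g}^k-g^k}^2 \le L^2 T\sum_{l=1}^{T}\norm{x^{k-l+1}-x^{k-l}}^2$ by Cauchy--Schwarz, and finally $\norm{x^{j+1}-x^j}^2 \le 2\eta_j^2\bigl(\norm{P^j}^2+\norm{\delta^j}^2\bigr)$ again by nonexpansiveness.

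\textbf{Step 3 (summation and closing the recursion).} Sum the Step~1 inequality over $k=1,\dots,K$, telescoping $\sum_k(\Psi(x^k)-\Psi(x^{k+1})) \le \Psi(x^1)-\Psi(x^*)$, and insert the Step~2 bounds. The one delicate manipulation is the double sum from the delay: swapping the order of summation,
\[
\sum_{k=1}^{K}\eta_k\sum_{l=1}^{T}\norm{x^{k-l+1}-x^{k-l}}^2 \le \sum_{j}\norm{x^{j+1}-x^j}^2\sum_{l=1}^{T}\eta_{j+l},
\]
and then the condition $6\eta_k L^2 T\sum_{l=1}^{T}\eta_{k+l}\le 1$ makes the coefficient multiplying each $\norm{x^{j+1}-x^j}^2\le 2\eta_j^2(\norm{P^j}^2+\norm{\delta^j}^2)$ small enough that the $\norm{P^j}^2$ contributions are absorbed into $\sum_k(\eta_k-8L\eta_k^2)\mathbb{E}[\norm{P^k}^2]$ on the left, while the $\norm{\delta^j}^2$ contributions collapse (using $\mathbb{E}[\norm{\delta^j}^2]\le \sigma^2/N + L^2 T\sum_l\mathbb{E}\norm{x^{j-l+1}-x^{j-l}}^2$ once more) into the term $\frac{\sum_k (8L\eta_k^2 + 12\eta_k L^2 T\sum_{l=1}^{T}\eta_{k-l}^2)\sigma^2}{N}$ on the right. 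Dividing through by $\sum_{k=1}^{K}(\eta_k-8L\eta_k^2)$ yields the claimed ergodic bound.

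\textbf{Main obstacle.} The hard part is Step~3: the delay bias at iteration $k$ couples to the increments at iterations $k-T,\dots,k-1$, which themselves depend on earlier gradient errors, so the per-iteration estimate is a genuine recursion rather than a closed inequality; closing it requires summing first and then reorganizing the double sums so the two step-size conditions can be invoked with exactly the constants ($8L\eta_k^2$ and $12\eta_k L^2 T\sum_l\eta_{k-l}^2$) appearing in the statement. A secondary subtlety, which the $\bar{x}^{k+1}$ device is designed to handle, is being precise about which $\sigma$-algebra each quantity is measurable with respect to, since $\tilde{G}^k$ aggregates sample gradients computed at different delayed times --- Assumption~\ref{asmp:indie1} is exactly what licenses both the vanishing of the cross term and the $\sigma^2/N$ variance reduction.
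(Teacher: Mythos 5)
Your proposal is correct and follows essentially the same route as the paper: a prox-optimality-plus-smoothness descent estimate, nonexpansiveness of the proximal map to trade $\norm{x^{k+1}-x^k}^2$ for the gradient-mapping norm and to control the perturbation, a variance-plus-delay decomposition of the gradient error (your $\hat{g}^k$ is exactly the paper's $\tilde{g}_k$), and a telescoped double sum reordered so the two step-size conditions absorb the delayed $\norm{P^j}^2$ terms. The only cosmetic difference is that you anchor the auxiliary point $\bar{x}^{k+1}$ at the true gradient $g^k$ and work with $P(x^k,g^k,\eta_k)$ throughout, whereas the paper first derives the descent lemma in terms of $P(x_k,\tilde{g}_k,\eta_k)$ and converts at the end via the same nonexpansiveness lemma.
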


Taking a closer look at Theorem~\ref{thm:aspg_convergence}, we can properly choose the learning rate $\eta_k$ as a constant value and derive the following convergence rate:
\begin{corollary}
	% Assume that Assumptions \ref{asmp:unbias_grad}, \ref{asmp:var_grad} and \ref{asmp:smooth} hold. 
	Let the step length be a constant, i.e., 
	{
	\begin{equation}
	    \eta := \sqrt{\frac{(\Psi(x^1) - \Psi(x^*))N}{2KL\sigma^2}}.
	\end{equation}
	}
	If the delay bound $T$ satisfies
	{
	\begin{equation}\label{eq:T_bound}
	    K \geq \frac{128(\Psi(x^1) - \Psi(x^*))NL}{\sigma^2} (T+1)^4,
	\end{equation}
	}
	then the output of Algorithm~\ref{thm:aspg_convergence} satisfies the following ergodic convergence rate:
	{
	\begin{equation}
	\mathop{\min}_{k=1,\ldots,K} \mathbb{E}[\norm{P(x^k, g^k, \eta_k)}^2] 
	\leq \frac{1}{K}\sum_{k=1}^K \mathbb{E}[\norm{P(x^k, g^k, \eta_k)}^2] 
	\leq 32\sqrt{\frac{2(\Psi(x^1) - \Psi(x^*))L\sigma^2}{KN}}.
	\label{eq:convergence_1}
	\end{equation}
	}
	\label{corr:aspg_convergence}
	% \vspace{-5mm}
\end{corollary}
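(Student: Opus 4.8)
The plan is to instantiate Theorem~\ref{thm:aspg_convergence} at the constant step $\eta_k \equiv \eta = \sqrt{(\Psi(x^1)-\Psi(x^*))N/(2KL\sigma^2)}$ and then carry out the bookkeeping. Write $\Delta := \Psi(x^1) - \Psi(x^*)$. First I would check that this $\eta$ is admissible, i.e., satisfies the two hypotheses of Theorem~\ref{thm:aspg_convergence}. With a constant step the second hypothesis reads $6 L^2 T^2 \eta^2 \le 1$; together with $\eta \le 1/(16L)$, these are equivalent to the lower bounds $K \ge 128 L\Delta N/\sigma^2$ and $K \ge 3 L T^2\Delta N/\sigma^2$ respectively, and both are implied by \eqref{eq:T_bound} since $(T+1)^4 \ge 1$ and $128(T+1)^4 \ge 3T^2$. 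So the theorem applies with this $\eta$.

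With $\eta_k \equiv \eta$ the weights $\eta_k - 8L\eta_k^2$ in the ergodic average are all equal, and positive because $\eta \le 1/(16L)$ forces $8L\eta \le 1/2$; hence the sums over $k$ on the left-hand side of Theorem~\ref{thm:aspg_convergence} factor and cancel, leaving $\frac1K\sum_{k=1}^K \mathbb{E}[\norm{P(x^k, g^k, \eta)}^2]$, which already dominates $\min_{k}\mathbb{E}[\norm{P(x^k, g^k, \eta)}^2]$ --- this gives the first inequality in \eqref{eq:convergence_1}. On the right-hand side I would use $\eta - 8L\eta^2 = \eta(1 - 8L\eta) \ge \eta/2$ and $\sum_{l=1}^T \eta_{k-l}^2 = T\eta^2$ to reduce the bound to
\[
\text{RHS}\;\le\;\frac{16\Delta}{K\eta}\;+\;\frac{16 L\eta\sigma^2}{N}\;+\;\frac{24 L^2 T^2 \eta^2 \sigma^2}{N}.
\]
Substituting the stated $\eta$ turns the first two terms into $16\sqrt{2\Delta L\sigma^2/(KN)}$ and $8\sqrt{2\Delta L\sigma^2/(KN)}$, and turns the delay term into $12 L T^2 \Delta/K$.

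The only step that genuinely uses \eqref{eq:T_bound}, and the only part I would call delicate, is bounding the delay term: it decays like $O(1/K)$, hence is asymptotically negligible against the $O(1/\sqrt K)$ leading terms, but this must be made quantitative. Squaring, $12 L T^2\Delta/K \le 8\sqrt{2\Delta L\sigma^2/(KN)}$ is equivalent to $K \ge 9 L T^4\Delta N/(8\sigma^2)$, which follows from \eqref{eq:T_bound} because $128(T+1)^4 \ge 128 T^4 \ge \tfrac98 T^4$. Adding the three pieces yields $\text{RHS}\le 32\sqrt{2\Delta L\sigma^2/(KN)}$, i.e., the second inequality in \eqref{eq:convergence_1}. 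I anticipate no conceptual obstacle beyond keeping the numerical constants under $32\sqrt2$ and applying \eqref{eq:T_bound} with the correct power of $(T+1)$ at each use.
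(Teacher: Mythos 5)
Your proposal is correct and follows essentially the same route as the paper: instantiate Theorem~\ref{thm:aspg_convergence} with the constant step, verify its two hypotheses from \eqref{eq:T_bound}, lower-bound the weights by $\eta/2$, and absorb the delay term using the lower bound on $K$. The only (cosmetic) difference is in the bookkeeping of the constant $32$: the paper first absorbs $\tfrac{3}{2}L^2T^2\eta^3 \le L\eta^2$ before substituting $\eta$ (giving $16+16$), whereas you substitute first and compare the resulting $12LT^2\Delta/K$ term directly against the $O(1/\sqrt{K})$ leading terms (giving $16+8+8$); both splits use \eqref{eq:T_bound} in the same essential way.
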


\begin{remark}[Consistency with ProxSGD]
	When $T=0$, our proposed Asyn-ProxSGD reduces to the vanilla ProxSGD (e.g., \citep{ghadimi2016mini}). Thus, the iteration complexity is $O(1/\epsilon^2)$ according to \eqref{eq:convergence_1}, attaining the same result as that in \citep{ghadimi2016mini} \emph{yet without assuming increased mini-batch sizes}.
\end{remark}
\begin{remark}[Linear speedup w.r.t. the staleness]
	From \eqref{eq:convergence_1} we can see that linear speedup is achievable, as long as the delay $T$ is bounded by $O(K^{1/4})$ (if other parameters are constants). The reason is that by \eqref{eq:T_bound} and \eqref{eq:convergence_1}, as long as $T$ is no more than $O(K^{1/4})$, the iteration complexity (from a global perspective) to achieve $\epsilon$-optimality is $O(1/\epsilon^2)$, which is independent from $T$. 
\end{remark}
\begin{remark}[Linear speedup w.r.t. number of workers]
	As the iteration complexity is $O(1/\epsilon^2)$ to achieve $\epsilon$-optimality, it is also independent from the number of workers $m$ if assuming other parameters are constants. It is worth noting that the delay bound $T$ is roughly proportional to the number of workers. As the iteration complexity is independent from $T$, we can conclude that the total iterations will be shortened to $1/T$ of a single worker's iterations if $\Theta(T)$ workers work in parallel, achieving nearly linear speedup.
\end{remark}
\begin{remark}[Comparison with Asyn-SGD]
	Compared with asynchronous SGD \citep{lian2015asynchronous}, in which $T$ or the number of workers should be bounded by $O(\sqrt{K/N})$ to achieve linear speedup, here Asyn-ProxSGD is more sensitive to delays and more suitable for a smaller cluster.
\end{remark}

\section{Experiments}
\label{sec:simu}
\begin{figure*}[t]
  \centering
  % \hspace{-1mm}
  \subfigure[\texttt{a9a}]{
    \includegraphics[height=1.6in]{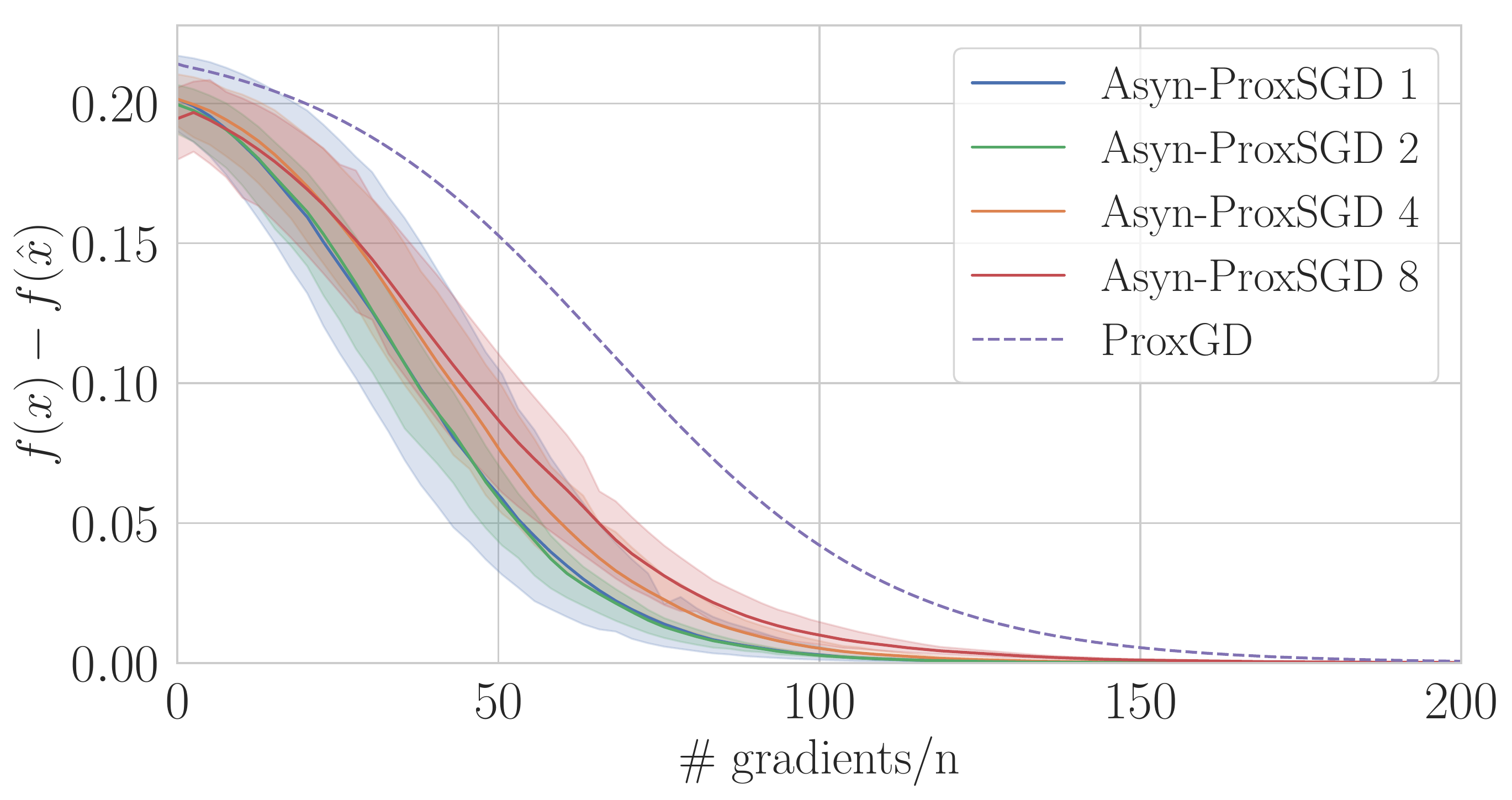}
    \label{fig:obj1-a9a}
  }
  % \hspace{-3mm}
  \subfigure[\texttt{mnist}]{
    \includegraphics[height=1.6in]{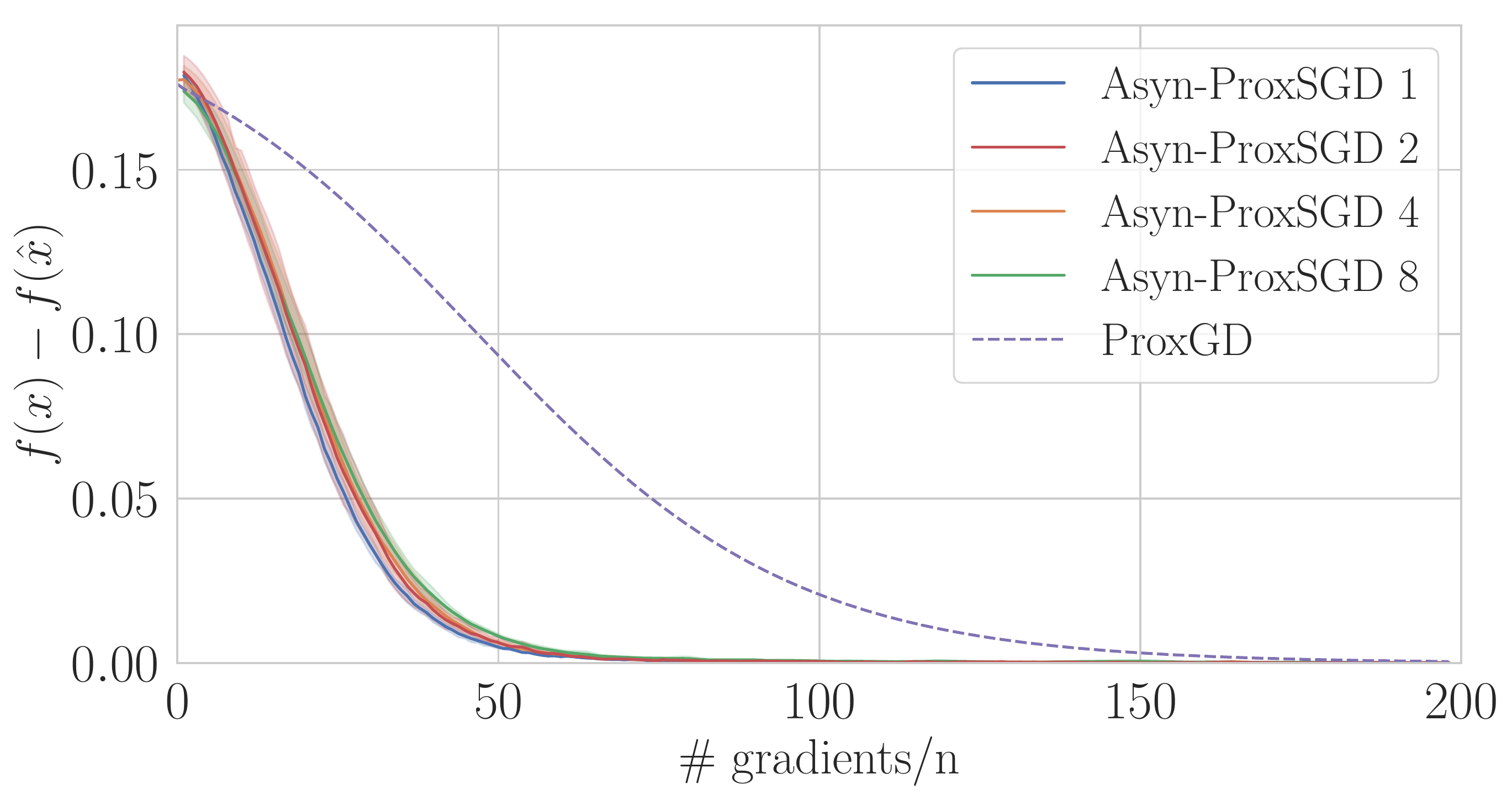}
    \label{fig:obj1-mnist}
  }
  % \hspace{-3mm}
  % \subfigure[\texttt{covtype}]{
  %   \includegraphics[height=1.6in]{}
  %   \label{fig:obj1-covtyp}
  % }%
  \vspace{-1mm}
  \subfigure[\texttt{a9a}]{
    \includegraphics[height=1.6in]{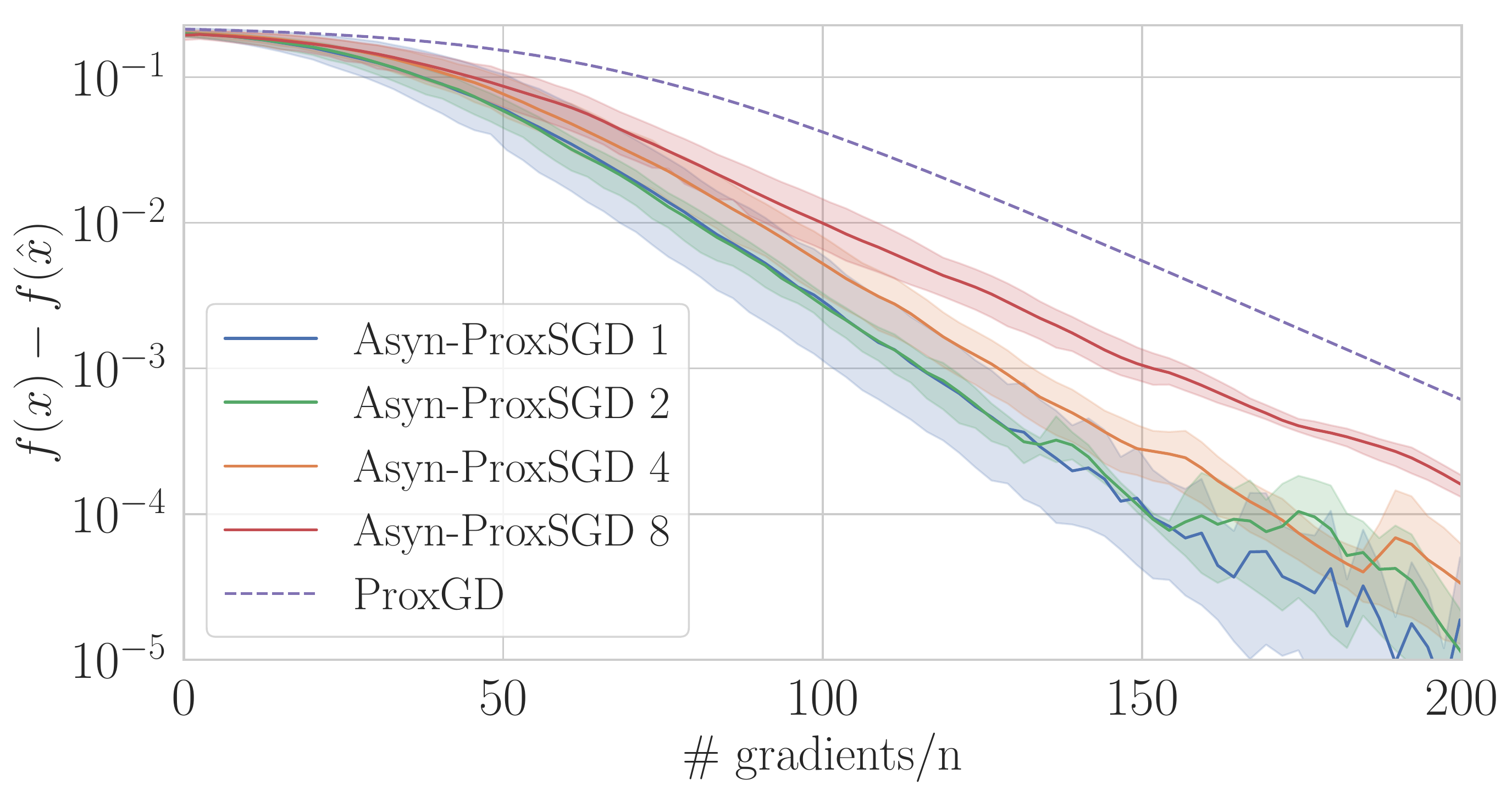}
    \label{fig:obj2-a9a}
  }
  % \hspace{-3mm}
  \subfigure[\texttt{mnist}]{
    \includegraphics[height=1.6in]{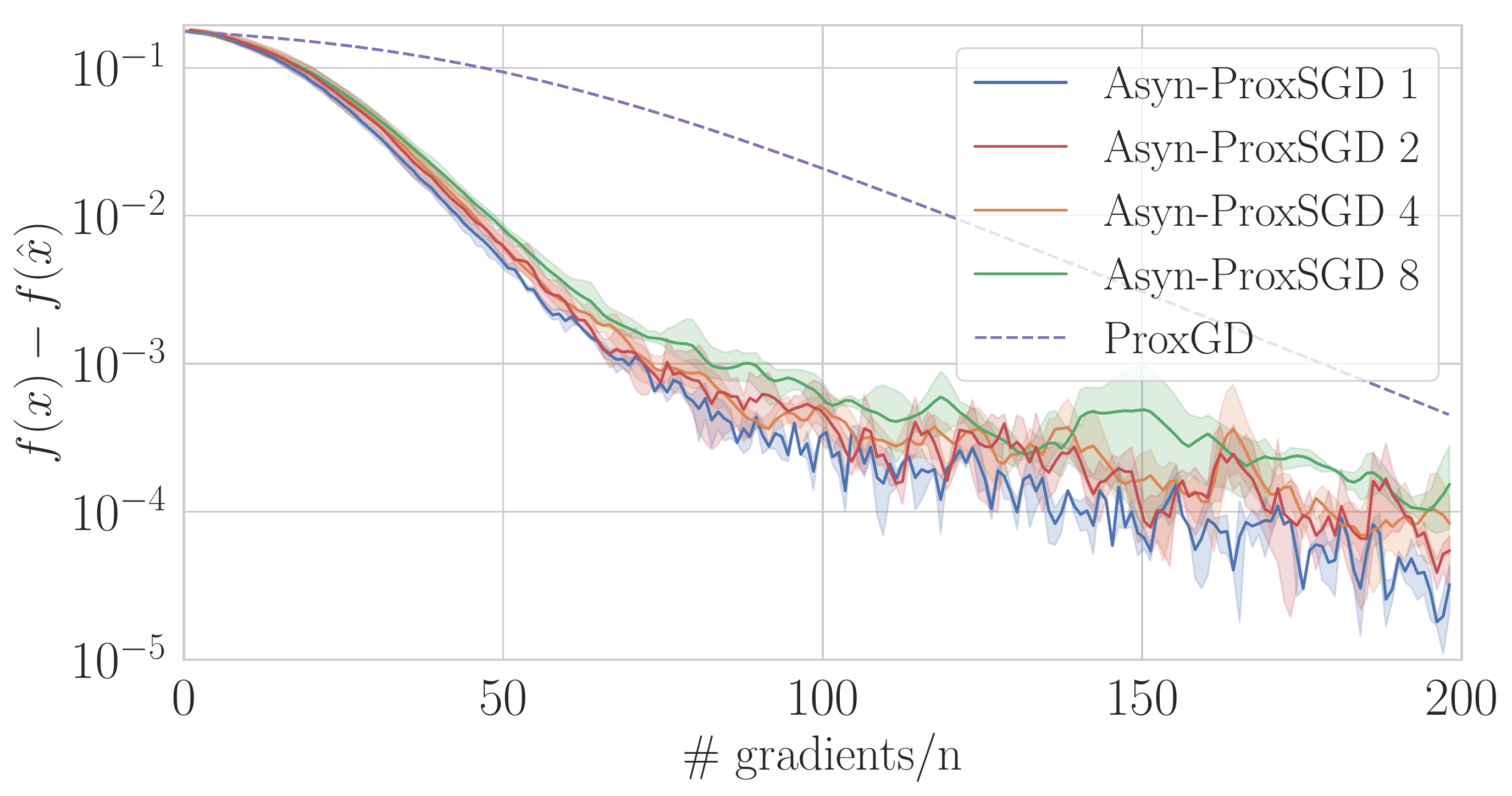}
    \label{fig:obj2-mnist}
  }
  % \hspace{-3mm}
  % \subfigure[\texttt{covtype}]{
  %   \includegraphics[height=1.6in]{}
  %   \label{fig:obj2-covtype}
  % }
  \caption{Performance of ProxGD and Async-ProxSGD on \texttt{a9a} (left) and \texttt{mnist} (right) datasets. Here the x-axis represents how many sample gradients is computed (divided by $n$), and the y-axis is the function suboptimality $f(x) - f(\hat{x})$ where $\hat{x}$ is obtained by running gradient descent for many iterations with multiple restarts. Note all values on the y-axis are normalized by $n$.}
  \label{fig:simu_grad_loss}
\end{figure*}

We now present experimental results to confirm the capability and efficiency of our proposed algorithm to solve challenging non-convex non-smooth machine learning problems. We implemented our algorithm on TensorFlow \citep{abadi2016tensorflow}, a flexible and efficient deep learning library. We execute our algorithm on \texttt{Ray} \citep{moritz2017ray}, a general-purpose framework that enables parallel and distributed execution of Python as well as TensorFlow functions. A key feature of \texttt{Ray} is that it provides a unified task-parallel abstraction, which can serve as workers, and actor abstraction, which stores some states and acts like parameter servers.

We use a cluster of 9 instances on Google Cloud. Each instance has one CPU core with 3.75 GB RAM, running 64-bit Ubuntu 16.04 LTS. Each server or worker uses only one core, with 9 CPU cores and 60 GB RAM used in total. Only one instance is the server node, while the other nodes are workers.

\textbf{Setup:} 
In our experiments, we consider the problem of non-negative principle component analysis (NN-PCA) \citep{reddi2016proximal}. Given a set of $n$ samples $\{z_i\}_{i=1}^n$, NN-PCA solves the following optimization problem
{
\begin{equation}
  \mathop{\min}_{\norm{x} \leq 1, x \geq 0} -\frac{1}{2}x^\top \left(\sum_{i=1}^n z_i z_i^\top \right)x.
\end{equation}
}
This NN-PCA problem is NP-hard in general. To apply our algorithm, we can rewrite it with $f_i(x) = -(x^\top z_i)^2 / 2$ for all samples $i \in [n]$. Since the feasible set $C=\{x \in \real{d} | \norm{x} \leq 1, x \geq 0\}$ is convex, we can replace the optimization constraint by a regularizer in the form of 
an indicator function $h(x) = I_C(x)$ , such that $h(x)=0$ if $x \in C$ and $\infty$ otherwise. 

\begin{table}[]
\centering
\caption{Description of the two classification datasets used.}
\label{table:dataset}
\begin{tabular}{c|cc}
\specialrule{.1em}{.05em}{.05em}
datasets & dimension & sample size \\ \hline
a9a      & 123       & 32,561      \\
mnist    & 780       & 60,000      \\
% covtype  & 54        & 581,012     \\
\specialrule{.1em}{.05em}{.05em}
\end{tabular}
% \vspace{-3mm}
\end{table}

%, although this scheme is slightly different from how we sample minibatches in our proposed algorithms.

The hyper-parameters are set as follows. The step size is set using the popular $t$-inverse step size choice $\eta_k = \eta_0/(1 + \eta' (k/k'))$, which is the same as the one used in \citep{reddi2016proximal}. Here $\eta_0, \eta' > 0$ determine how learning rates change, and $k'$ controls for how many steps the learning rate would change.
% For all experiments, the coefficients are set as $\lambda_1=0.1$ and $\lambda_2=0.001$. We set the mini-batch size to 8192. The learning rate starts at $0.1$, then at iteration $k$, we set $\eta_k := 0.1 / \sqrt{k}$.

We conduct experiments on two datasets \footnote{Available at \url{http://www.csie.ntu.edu.tw/~cjlin/libsvmtools/datasets/}}, with their information summarized in Table~\ref{table:dataset}. All samples have been normalized, i.e., $\norm{z_i} = 1$ for all $i \in [n]$. In our experiments, we use a batch size of $N=8192$ in order to evaluate the performance and speedup behavior of the algorithm under constant batches.

We consider the \emph{function suboptimality} value as our performance metric. In particular, we run proximal gradient descent (ProxGD) for a large number of iterations with multiple random initializations, and obtain a solution $\hat{x}$. For all experiments, we evaluate function suboptimality, which is
the gap $f(x) - f(\hat{x})$, against %the number of iterations that passes through the entire dataset, i.e., 
the number of sample gradients processed by the server (divided by the total number of samples $n$), and then against time.

% \textbf{Implementation:}
% We implemented our algorithm on TensorFlow \citep{abadi2016tensorflow}, a flexible and efficient deep learning library. We execute our algorithm on \texttt{Ray} \citep{moritz2017ray}, a general-purpose framework that enables parallel and distributed execution of Python as well as TensorFlow functions. A key feature of \texttt{Ray} is that it provides a unified task-parallel abstraction, which can serve as workers, and actor abstraction, which stores some states and acts like parameter servers.

\begin{figure*}[t]
  \centering
  % \hspace{-1mm}
  \subfigure[\texttt{a9a}]{
    \includegraphics[height=1.6in]{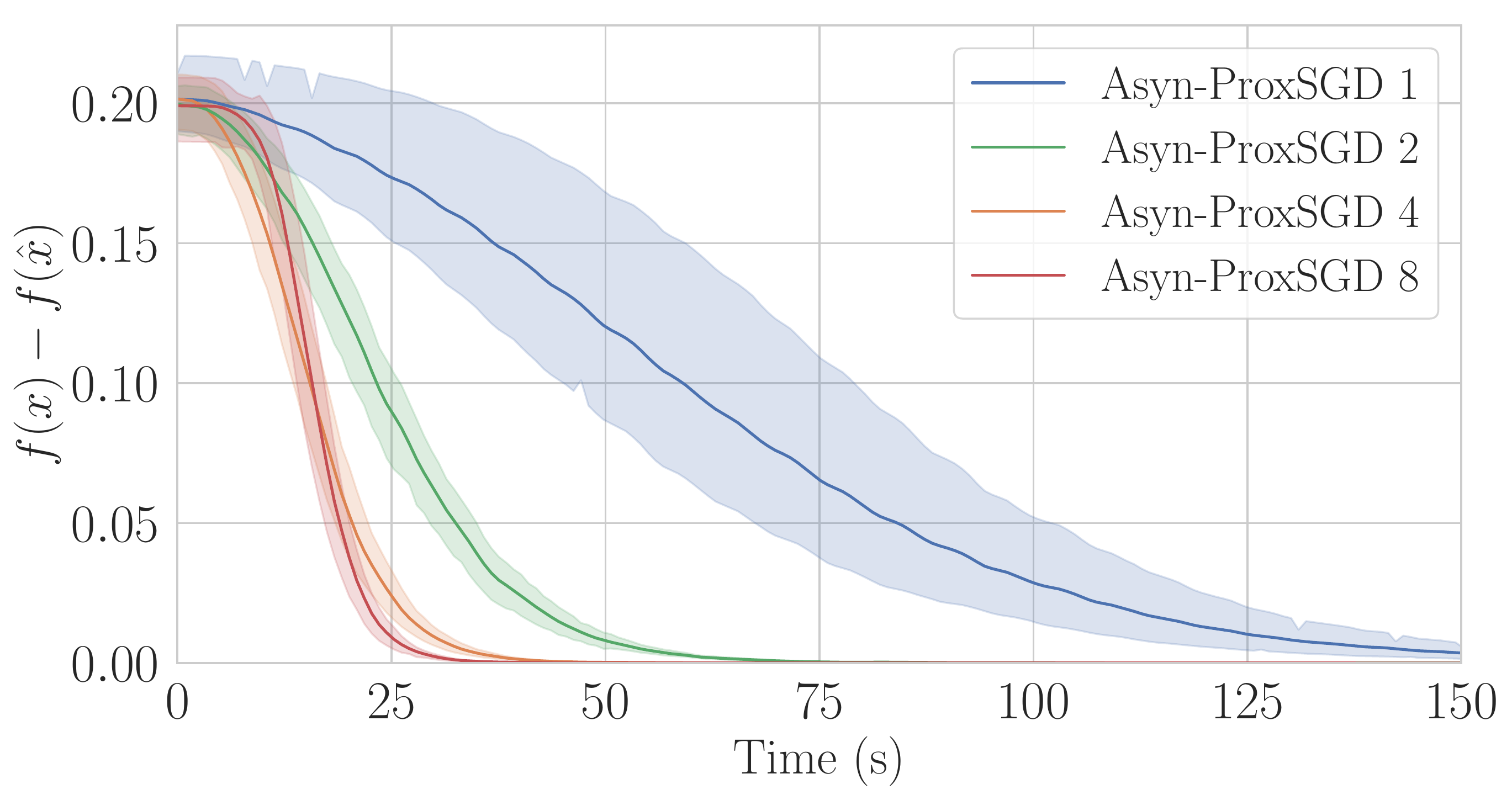}
    \label{fig:obj3-a9a}
  }
  % \hspace{-3mm}
  \subfigure[\texttt{mnist}]{
    \includegraphics[height=1.6in]{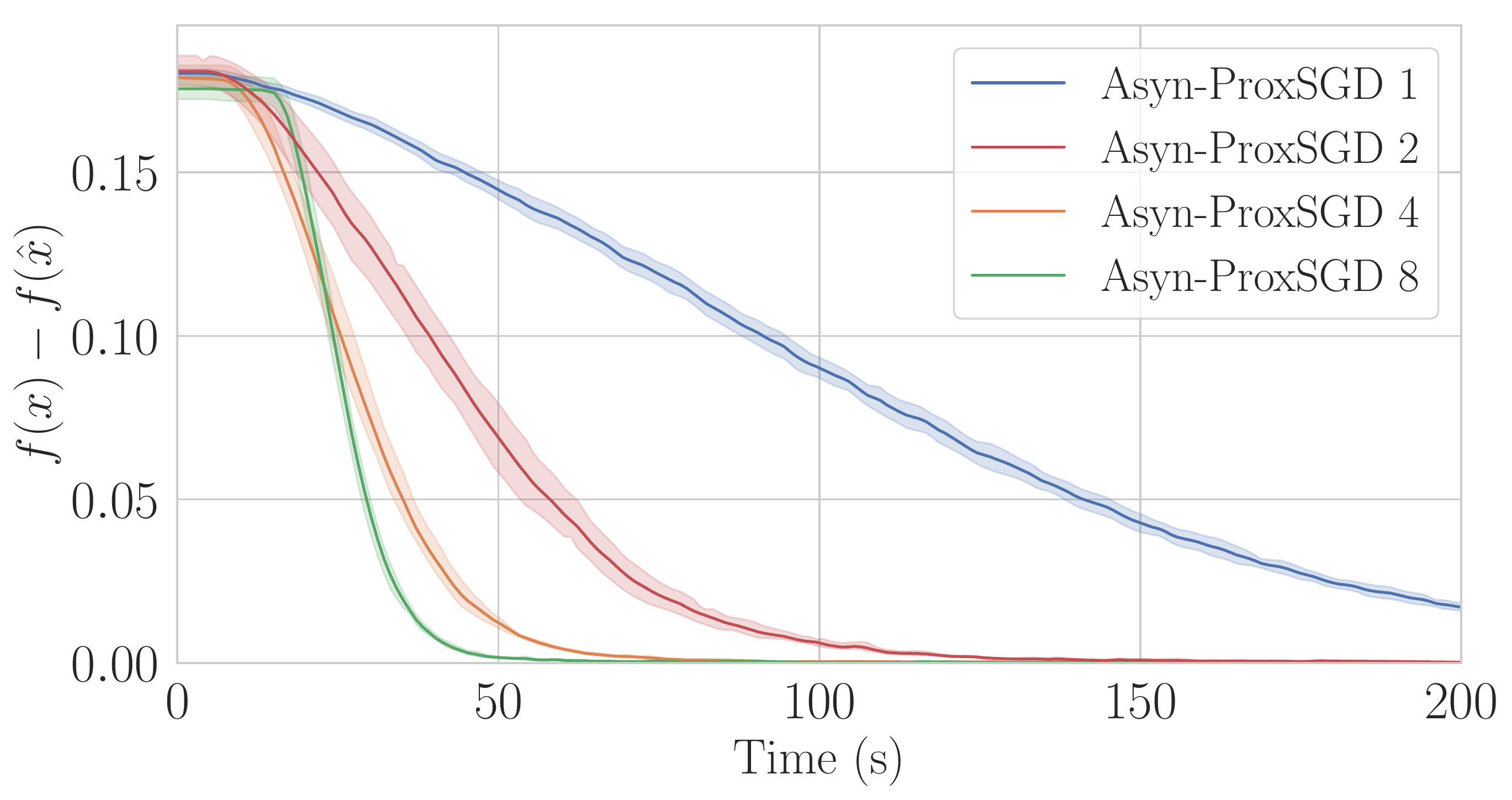}
    \label{fig:obj3-mnist}
  }
  % \hspace{-3mm}
  % \subfigure[\texttt{covtype}]{
  %   \includegraphics[height=1.6in]{}
  %   \label{fig:obj3-covtyp}
  % }%
  \vspace{-1mm}
  \subfigure[\texttt{a9a}]{
    \includegraphics[height=1.6in]{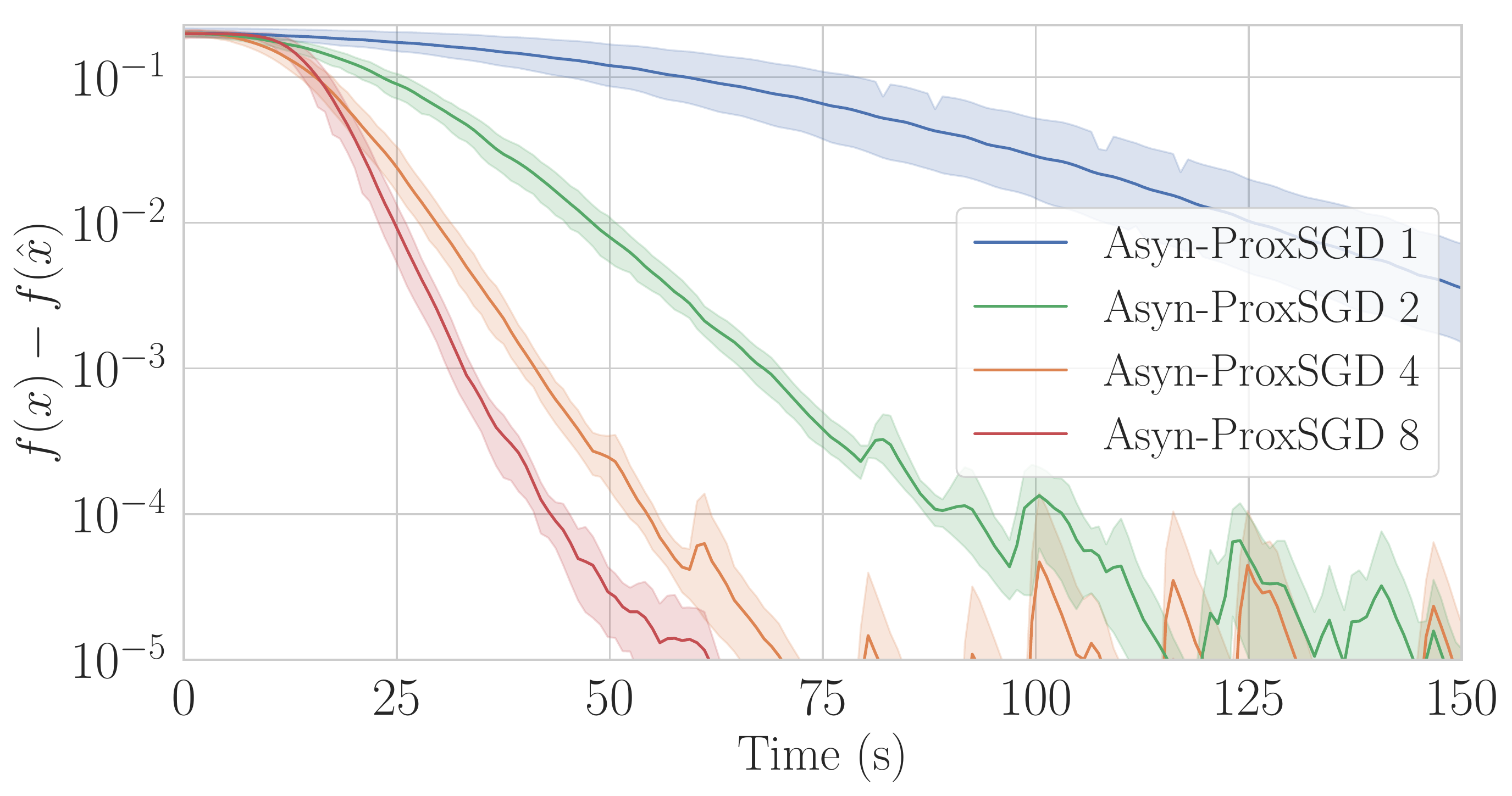}
    \label{fig:obj4-a9a}
  }
  % \hspace{-3mm}
  \subfigure[\texttt{mnist}]{
    \includegraphics[height=1.6in]{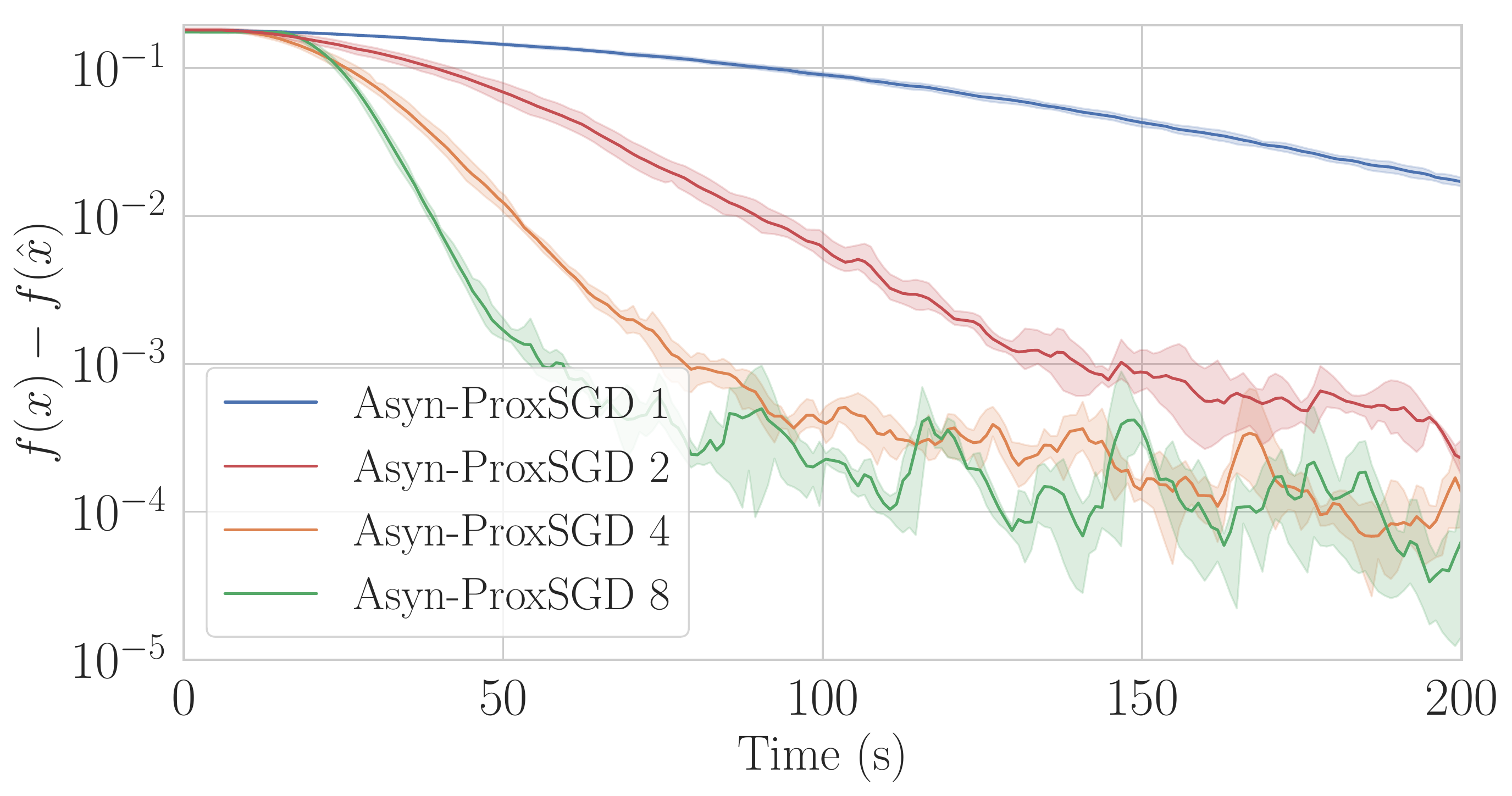}
    \label{fig:obj4-mnist}
  }
  % \hspace{-3mm}
  % \subfigure[\texttt{covtype}]{
  %   \includegraphics[height=1.6in]{}
  %   \label{fig:obj4-covtype}
  % }
  \caption{Performance of ProxGD and Async-ProxSGD on \texttt{a9a} (left) and \texttt{mnist} (right) datasets. Here the x-axis represents the actual running time, and the y-axis is the function suboptimality. Note all values on the y-axis are normalized by $n$.}
  \label{fig:simu_time_loss}
\end{figure*}

\textbf{Results:}
Empirically, Assumption~\ref{asmp:bound1} (bounded delays) is observed to hold for this cluster. For our proposed Asyn-ProxSGD algorithm, we are particularly interested in the speedup in terms of iterations and running time.
In particular, if we need $T_1$ iterations (with $T_1$ sample gradients processed by the server) to achieve a certain suboptimality level using one worker, and $T_p$ iterations (with $T_p$ sample gradients processed by the server) to achieve the same suboptimality with $p$ workers, the iteration speedup is defined as $p \times T_1 / T_p$ \citep{lian2015asynchronous}. Note that all iterations are counted on the server side, i.e., how many sample gradients are processed by the server. On the other hand, the running time speedup is defined as the ratio between the running time of using one worker and that of using $p$ workers to achieve the same suboptimality. 

The iteration and running time speedups on both datasets are shown in Fig.~\ref{fig:simu_grad_loss} and
Fig.~\ref{fig:simu_time_loss}, respectively. Such speedups achieved at the suboptimality level of $10^{-3}$ are presented in Table~\ref{table:iter_speedup} and~\ref{table:time_speedup}. We observe that nearly linear speedup can be achieved, although there is a loss of efficiency due to communication as the number workers increases. 

\begin{table}[]
\centering
\caption{Iteration speedup and time speedup of Asyn-ProxSGD at the suboptimality level $10^{-3}$. (\texttt{a9a})}
\label{table:iter_speedup}
\begin{tabular}{c|cccc}
\specialrule{.1em}{.05em}{.05em}
Workers           &  1       &  2       &  4       &  8       \\ \hline
Iteration Speedup & 1.000    & 1.982    & 3.584    & 5.973    \\
Time Speedup      & 1.000    & 2.219    & 3.857    & 5.876    \\
\specialrule{.1em}{.05em}{.05em}
\end{tabular}
% \vspace{-3mm}
\end{table}

\begin{table}[]
\centering
\caption{Iteration speedup and time speedup of Asyn-ProxSGD at the suboptimality level $10^{-3}$. (\texttt{mnist})}
\label{table:time_speedup}
\begin{tabular}{c|cccc}
\specialrule{.1em}{.05em}{.05em}
Workers           &  1       &  2       &  4       &  8       \\ \hline
Iteration Speedup & 1.000    & 2.031    & 3.783    & 7.352    \\
Time Speedup      & 1.000    & 2.285    & 4.103    & 5.714    \\
\specialrule{.1em}{.05em}{.05em}
\end{tabular}
\end{table}

\section{Related Work}
\label{sec:related}

% Stochastic methods
Stochastic optimization problems have been studied since the seminal work in 1951 \citep{robbins1951stochastic}, in which a classical stochastic approximation algorithm is proposed for solving a class of strongly convex problems. Since then, a series of studies on stochastic programming have focused on convex problems using SGD \citep{bottou1991stochastic,nemirovskii1983problem,moulines2011non}. The convergence rates of SGD for convex and strongly convex problems are known to be $O(1/\sqrt{K})$ and $O(1/K)$, respectively. For nonconvex optimization problems using SGD, Ghadimi and Lan \citep{ghadimi2013stochastic} proved an ergodic convergence rate of $O(1/\sqrt{K})$, which is consistent with the convergence rate of SGD for convex problems. %When $h\equiv 0$ or smooth, we can obtain gradients of $\Psi(\cdot)$ and SGD works.

% Existing asynchronous convergence theory for stochastic nonconvex problems is limited to stochastic gradient descent for smooth unconstrained regularizers \citep{tsitsiklis1986distributed,lian2015asynchronous}. 

% Stochastic proximal methods
When $h(\cdot)$ in \eqref{eq:original} is not necessarily smooth, there are other methods to handle the nonsmoothness. One approach is closely related to mirror descent stochastic approximation, e.g., \citep{nemirovski2009robust,lan2012optimal}. Another approach is based on proximal operators \citep{parikh2014proximal}, and is often referred to as the \emph{proximal stochastic gradient descent} (ProxSGD) method. Duchi et al. \citep{duchi2009efficient} prove that under a diminishing learning rate $\eta_k=1/(\mu k)$ for $\mu$-strongly convex objective functions, ProxSGD can achieve a convergence rate of $O(1/\mu K)$. For a nonconvex problem like \eqref{eq:original}, rather limited studies on ProxSGD exist so far. 
The closest approach to the one we consider here is \citep{ghadimi2016mini}, in which the convergence analysis is based on the assumption of an increasing minibatch size. Furthermore, Reddi et al. \citep{reddi2016proximal} prove convergence for nonconvex problems under a constant minibatch size, yet relying on additional mechanisms for variance reduction. We fill the gap in the literature by providing convergence rates for ProxSGD under constant batch sizes without variance reduction.

%However, it is still unknown if convergence to stationary points without variance reduction can be achieved, which will be studied in the following sections.

% Asynchronous Stochastic proximal methods
To deal with big data, asynchronous parallel optimization algorithms have been heavily studied. Recent work on asynchronous parallelism is mainly limited to the following categories: stochastic gradient descent for smooth optimization, e.g., \citep{recht2011hogwild,agarwal2011distributed,lian2015asynchronous,pan2016cyclades,mania2017} and deterministic ADMM, e.g. \citep{zhang2014asynchronous,hong2017distributed}. A non-stochastic asynchronous ProxSGD algorithm is presented by~\citep{li2014communication}, which however did not provide convergence rates for nonconvex problems.

%In this work, we study how asynchrony affects the convergence behavior of stochastic proximal methods for nonsmooth regularizers.

% Asynchronous Stochastic block proximal methods
% A number of block coordinate methods have been studied to deal with large models. Block coordinate methods for smooth problems with separable, convex constraints \citep{tseng1991rate} and general nonsmooth regularizers \citep{razaviyayn2014parallel,davis2016asynchronous,zhou2016convergence} are proposed. However, the study on \emph{stochastic} coordinate descent is limited and existing work like \citep{liu2015asynchronous} focuses on convex problems. \citep{xu2015block} study block stochastic proximal methods for nonconvex problems. However, they only analyze convergence to stationary points assuming an increasing minibatch size, and the convergence rate is not provided. A closely related piece of work \citep{davis2016sound} presents a stochastic block-coordinate method. However, the algorithm studied in \citep{davis2016sound} depends on the use of a noise term with diminishing variance to guarantee convergence. Our convergence results of ProxSGD do not rely on the assumption of increasing batch sizes, variance reduction or the use of additional noise terms. %In fact, our result subsumes the result in \citep{davis2016sound}.
%!TEX root = ../main.tex

\section{Concluding Remarks}
\label{sec:conclude}

In this paper, we study asynchronous parallel implementations of stochastic proximal gradient methods for solving nonconvex optimization problems, with convex yet possibly nonsmooth regularization.
However, compared to asynchronous parallel stochastic gradient descent (Asyn-SGD), which is targeting smooth optimization, the understanding of the convergence and speedup behavior of stochastic algorithms for the nonsmooth regularized optimization problems is quite limited, especially when the objective function is nonconvex. To fill this gap, we propose an asynchronous proximal stochastic gradient descent (Asyn-ProxSGD) algorithm with convergence rates provided for nonconvex problems. Our theoretical analysis suggests that the same order of convergence rate can be achieved for asynchronous ProxSGD for nonsmooth problems as for the asynchronous SGD, under constant minibatch sizes, without making additional assumptions on variance reduction. And a linear speedup is proven to be achievable for both asynchronous ProxSGD when the number of workers is bounded by $O({K}^{1/4})$. %We use extensive experimental results to verify our theoretical findings.

% references section
% \clearpage
\bibliographystyle{abbrvnat}
\bibliography{src/main}

\newpage
% \onecolumn
\appendix
%!TEX root = ../main.tex

%%%%%%%%------------------------------------------------------------------------
%%%% Appendix

% \appendix
% \appendixpage
%% Add appendix in Contents
% \addappheadtotoc

%%%% End Appendix
%%%%%%%%------------------------------------------------------------------------
\section{Auxiliary Lemmas}

\begin{lemma}[\citep{ghadimi2016mini}]
    For all $y \gets \prox{\eta h}(x - \eta g)$, we have:
    \begin{equation}
        \innerprod{g, y-x} + (h(y) - h(x)) \leq -\frac{\norm{y-x}_2^2}{\eta}.
    \end{equation}
    \label{lem:lem-1}
\end{lemma}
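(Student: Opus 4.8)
The plan is to unpack the definition of the proximal operator and combine its first-order optimality condition with the convexity of $h$. First I would write $y = \prox{\eta h}(x - \eta g)$ explicitly as the unique minimizer of the function $z \mapsto h(z) + \frac{1}{2\eta}\norm{z - (x-\eta g)}^2$; since $h$ is proper, closed and convex and the quadratic term makes this objective $1/\eta$-strongly convex, the minimizer exists and is unique, so the notation is well defined. Then, writing down the optimality condition (that $0$ lies in the subdifferential of the objective at $y$), and using the sum rule for a convex function plus an everywhere-differentiable quadratic, I obtain
\begin{equation*}
0 \in \partial h(y) + \frac{1}{\eta}\bigl(y - (x - \eta g)\bigr) = \partial h(y) + g + \frac{1}{\eta}(y - x).
\end{equation*}
Equivalently, the specific vector $v := -g - \frac{1}{\eta}(y-x)$ is a subgradient of $h$ at $y$.

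Next I would apply the subgradient inequality for the convex function $h$ at $y$ tested against the point $x$, namely $h(x) \geq h(y) + \innerprod{v,\, x - y}$. Substituting $v = -g - \frac{1}{\eta}(y-x)$ and simplifying $\innerprod{-g - \frac{1}{\eta}(y-x),\, x-y} = \innerprod{g,\, y-x} + \frac{1}{\eta}\norm{y-x}^2$ gives
\begin{equation*}
h(x) - h(y) \geq \innerprod{g,\, y-x} + \frac{1}{\eta}\norm{y-x}^2,
\end{equation*}
and rearranging yields exactly the claimed bound $\innerprod{g, y-x} + (h(y) - h(x)) \leq -\norm{y-x}^2/\eta$.

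I do not expect a serious obstacle here; the statement is essentially a restatement of the proximal optimality condition. The only points I would take care to state explicitly are (i) that the subdifferential sum rule applies without any constraint qualification because the quadratic penalty is smooth, and (ii) that when invoking the subgradient inequality one must use precisely the subgradient $v$ produced by the optimality condition, not an arbitrary element of $\partial h(y)$. Beyond these remarks the argument is only a few lines of algebra.
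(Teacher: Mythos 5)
Your proof is correct and follows essentially the same route as the paper's: both extract a subgradient $p\in\partial h(y)$ from the proximal optimality condition and then combine the resulting (variational) inequality with the subgradient inequality for $h$ at $y$ tested against $x$. The algebra checks out, so no further comment is needed.
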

Due to slightly different notations and definitions in \citep{ghadimi2016mini}, we provide a proof here for completeness. We refer readers to \citep{ghadimi2016mini} for more details.
\begin{proof}
    By the definition of proximal function, there exists a $p \in \partial h(y)$ such that:
    \begin{equation*}
        \begin{split}
            \innerprod{g + \frac{y-x}{\eta} + p, x - y} &\geq 0, \\
            \innerprod{g, x-y} &\geq \frac{1}{\eta} \innerprod{y-x, y-x} + \innerprod{p, y-x} \\
            \innerprod{g, x-y} + (h(x) - h(y)) &\geq \frac{1}{\eta} \norm{y-x}_2^2,
        \end{split}
    \end{equation*}
    which proves the lemma.
\end{proof}

\begin{lemma}[\citep{ghadimi2016mini}]
    For all $x,g,G \in \real{d}$, if $h: \real{d} \to \real{}$ is a convex function, we have
    \begin{equation}
        \norm{\prox{\eta h}(x - \eta G) - \prox{\eta h}(x - \eta g)} \leq \eta \norm{G-g}.
    \end{equation}
    \label{lem:lem-2}
\end{lemma}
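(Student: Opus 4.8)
The plan is to follow the same strategy as in the proof of Lemma~\ref{lem:lem-1}: write down the first-order optimality conditions characterizing the two proximal points, combine them via monotonicity of the subdifferential $\partial h$ (equivalently, by using convexity of $h$ twice), and conclude with Cauchy--Schwarz. An even shorter route is to quote that the proximal operator of a proper closed convex function is nonexpansive and apply it to the points $x-\eta G$ and $x-\eta g$, whose distance is $\eta\norm{G-g}$; but the self-contained derivation below matches the style of the preceding lemma.

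First I would set $y_1 := \prox{\eta h}(x-\eta G)$ and $y_2 := \prox{\eta h}(x-\eta g)$. Since $y_1$ and $y_2$ each minimize a strongly convex objective, the optimality conditions give subgradients $p_1 \in \partial h(y_1)$ and $p_2 \in \partial h(y_2)$ satisfying $p_1 + \tfrac{1}{\eta}(y_1-x) + G = 0$ and $p_2 + \tfrac{1}{\eta}(y_2-x) + g = 0$; these are exactly the relations ``$g + \tfrac{y-x}{\eta} + p = 0$'' used in the proof of Lemma~\ref{lem:lem-1}, now written for each of the two points. Subtracting them gives $p_1 - p_2 = -\tfrac{1}{\eta}(y_1-y_2) - (G-g)$.

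Next I would invoke convexity of $h$ at $y_1$ tested against $y_2$ and at $y_2$ tested against $y_1$, namely $h(y_2) \ge h(y_1) + \innerprod{p_1,\,y_2-y_1}$ and $h(y_1) \ge h(y_2) + \innerprod{p_2,\,y_1-y_2}$. Adding these cancels the $h$-terms and yields $\innerprod{p_1-p_2,\,y_1-y_2}\ge 0$. Substituting the expression for $p_1-p_2$ (the contributions of $x$ cancel) gives $\innerprod{g-G,\,y_1-y_2}\ge \tfrac{1}{\eta}\norm{y_1-y_2}^2$. Finally, if $y_1=y_2$ the inequality of the lemma is trivial; otherwise apply Cauchy--Schwarz to the left-hand side to obtain $\norm{G-g}\,\norm{y_1-y_2}\ge\tfrac{1}{\eta}\norm{y_1-y_2}^2$ and divide by $\norm{y_1-y_2}$.

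I do not anticipate a genuine obstacle here: the statement is just the (firm) nonexpansiveness of $\prox{\eta h}$ rescaled by $\eta$. The only points requiring care are the sign bookkeeping when combining the two subgradient inequalities, and separating out the degenerate case $y_1=y_2$ before dividing.
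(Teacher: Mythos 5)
Your proposal is correct and follows essentially the same route as the paper: both write the first-order optimality conditions for the two proximal points, combine the resulting subgradient relations via convexity/monotonicity of $\partial h$ to get $\innerprod{g-G,\,y_1-y_2}\ge \tfrac{1}{\eta}\norm{y_1-y_2}^2$, and finish with Cauchy--Schwarz. The only cosmetic difference is that you phrase the optimality condition as an equality with a chosen subgradient while the paper uses the equivalent variational-inequality form tested at the other point.
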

\begin{proof}
    Let $y$ denote $\prox{\eta h}(x - \eta G)$ and $z$ denote $\prox{\eta h}(x - \eta g)$. By definition of the proximal operator, for all $u \in \real{d}$, we have
    \begin{align*}
        \innerprod{G + \frac{y-x}{\eta} + p, u-y} &\geq 0, \\
        \innerprod{g + \frac{z-x}{\eta} + q, u-z} &\geq 0,
    \end{align*}
    where $p \in \partial h(y)$ and $q \in \partial h(z)$. Let $z$ substitute $u$ in the first inequality and $y$ in the second one, we have
    \begin{align*}
        \innerprod{G + \frac{y-x}{\eta} + p, z-y} &\geq 0, \\
        \innerprod{g + \frac{z-x}{\eta} + q, y-z} &\geq 0.
    \end{align*}
    Then, we have
    \begin{align}
        \innerprod{G, z-y} &\geq \innerprod{\frac{y-x}{\eta}, y-z} + \innerprod{p, y-z}, \\
        &= \frac{1}{\eta}\innerprod{y-z, y-z} + \frac{1}{\eta}\innerprod{z-x, y-z} + \innerprod{p, y-z}, \\
        &\geq \frac{\norm{y-z}^2}{\eta} + \frac{1}{\eta}\innerprod{z-x, y-z} +h(y) - h(z),
         \label{eq:lem-2-1}
    \end{align}
    and
    \begin{align}
        \innerprod{g, y-z} &\geq \innerprod{\frac{z-x}{\eta} + q, z-y}, \\
        &= \frac{1}{\eta} \innerprod{z-x, z-y} + \innerprod{q, z-y} \\
        &\geq \frac{1}{\eta} \innerprod{z-x, z-y} + h(z) - h(y). \label{eq:lem-2-2}
    \end{align}
    By adding \eqref{eq:lem-2-1} and \eqref{eq:lem-2-2}, we obtain
    \begin{align*}
        \norm{G-g} \norm{z-y} \geq \innerprod{G-g, z-y} \geq \frac{1}{\eta} \norm{y-z}^2,
    \end{align*}
    which proves the lemma.
\end{proof}

\begin{lemma}[\citep{ghadimi2016mini}]
    For any $g_1$ and $g_2$, we have
    \begin{equation}
        \norm{P(x, g_1, \eta) - P(x, g_2, \eta)} \leq \norm{g_1 - g_2}.
    \end{equation}
    \label{lem:lem-3}
\end{lemma}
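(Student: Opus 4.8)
The plan is to unfold the definition of the gradient mapping and reduce the claim directly to Lemma~\ref{lem:lem-2}. Recall that the (prox-)gradient mapping used throughout the analysis is
\begin{equation*}
    P(x, g, \eta) = \frac{1}{\eta}\Bigl( x - \prox{\eta h}(x - \eta g) \Bigr),
\end{equation*}
so that a proximal step reads $x^{k+1} = x^k - \eta_k P(x^k, g^k, \eta_k)$. With this in hand, the difference of two gradient mappings at the same point $x$ and step size $\eta$ but different gradient surrogates $g_1, g_2$ is
\begin{equation*}
    P(x, g_1, \eta) - P(x, g_2, \eta) = \frac{1}{\eta}\Bigl( \prox{\eta h}(x - \eta g_2) - \prox{\eta h}(x - \eta g_1) \Bigr),
\end{equation*}
since the leading $x/\eta$ terms cancel.

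Next I would take norms on both sides and invoke Lemma~\ref{lem:lem-2}, which states that $\norm{\prox{\eta h}(x - \eta G) - \prox{\eta h}(x - \eta g)} \leq \eta \norm{G - g}$ whenever $h$ is convex (which is one of our standing assumptions on the regularizer). Applying it with $G = g_1$ and $g = g_2$ gives
\begin{equation*}
    \norm{P(x, g_1, \eta) - P(x, g_2, \eta)} = \frac{1}{\eta}\norm{\prox{\eta h}(x - \eta g_1) - \prox{\eta h}(x - \eta g_2)} \leq \frac{1}{\eta}\cdot \eta\,\norm{g_1 - g_2} = \norm{g_1 - g_2},
\end{equation*}
which is exactly the claimed inequality.

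There is essentially no serious obstacle here: the lemma is an immediate corollary of the $\eta$-Lipschitz (in the gradient argument) property of the proximal map already established in Lemma~\ref{lem:lem-2}, combined with the homogeneity of the $1/\eta$ factor in the definition of $P$. The only point requiring (minor) care is to make the definition of $P(x,g,\eta)$ explicit before using it, and to note that the convexity of $h$ needed by Lemma~\ref{lem:lem-2} is in force. This lemma will later be used to control the error incurred by replacing the true gradient $g^k$ by the delayed stochastic gradient $\tilde G^k$ inside the gradient mapping, feeding into the main descent estimate of Theorem~\ref{thm:aspg_convergence}.
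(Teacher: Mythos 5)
Your proof is correct and follows exactly the route the paper takes: the paper's own one-line argument is precisely "apply Lemma~\ref{lem:lem-2} together with the definition of the gradient mapping," which you have simply written out in full. No gaps.
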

\begin{proof}
    It can be obtained by directly applying Lemma~\ref{lem:lem-2} and the definition of gradient mapping.
\end{proof}

\begin{lemma}[\citep{reddi2016proximal}]
    Suppose we define $y = \prox{\eta h} (x - \eta g)$ for some $g$. Then for $y$, the following inequality holds:
    \begin{equation}
    \begin{split}
        \Psi(y) \leq \Psi(z) + &\innerprod{y-z, \nabla f(x) - g} \\
        &+ \left( \frac{L}{2} - \frac{1}{2\eta} \right) \norm{y-x}^2
         + \left( \frac{L}{2} + \frac{1}{2\eta} \right) \norm{z-x}^2
         - \frac{1}{2\eta} \norm{y-z}^2,
    \end{split}
    \end{equation}
    for all $z$.
    \label{lem:grad_diff}
\end{lemma}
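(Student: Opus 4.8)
The plan is to combine three ingredients: the first-order optimality condition defining the proximal step, the convexity of $h$, and the two-sided consequence of Assumption~\ref{asmp:smooth} (the descent lemma in \emph{both} directions, the lower one of which holds for any $L$-smooth function without any convexity of $f$).

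First I would record the optimality condition for $y = \prox{\eta h}(x - \eta g)$. Since $y$ minimizes the strongly convex function $u \mapsto h(u) + \frac{1}{2\eta}\norm{u-(x-\eta g)}^2$ over $\real{d}$, there is a subgradient $p \in \partial h(y)$ with $g + \frac{y-x}{\eta} + p = 0$ --- this is exactly the relation already used in the proof of Lemma~\ref{lem:lem-1}. Convexity of $h$ then gives, for the arbitrary point $z$, $h(z) \geq h(y) + \innerprod{p,\, z-y}$, which after substituting $p = -(g + \frac{y-x}{\eta})$ rearranges to
\[
h(y) \;\leq\; h(z) - \innerprod{g,\, y-z} - \tfrac{1}{\eta}\innerprod{y-x,\, y-z}.
\]

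Next I would invoke Assumption~\ref{asmp:smooth} twice. The standard upper bound gives $f(y) \leq f(x) + \innerprod{\nabla f(x),\, y-x} + \tfrac{L}{2}\norm{y-x}^2$, while the matching lower bound $f(z) \geq f(x) + \innerprod{\nabla f(x),\, z-x} - \tfrac{L}{2}\norm{z-x}^2$ (valid for possibly nonconvex $L$-smooth $f$) rearranges to $f(x) \leq f(z) + \innerprod{\nabla f(x),\, x-z} + \tfrac{L}{2}\norm{z-x}^2$. Adding these two inequalities and the $h$-inequality above, and using $\innerprod{\nabla f(x),\, y-x} + \innerprod{\nabla f(x),\, x-z} = \innerprod{\nabla f(x),\, y-z}$, yields
\[
\Psi(y) \;\leq\; \Psi(z) + \innerprod{y-z,\, \nabla f(x) - g} + \tfrac{L}{2}\norm{y-x}^2 + \tfrac{L}{2}\norm{z-x}^2 - \tfrac{1}{\eta}\innerprod{y-x,\, y-z}.
\]
Finally I would expand the cross term with the polarization identity $\innerprod{y-x,\, y-z} = \tfrac12\big(\norm{y-x}^2 + \norm{y-z}^2 - \norm{z-x}^2\big)$ and collect the coefficients of $\norm{y-x}^2$, $\norm{z-x}^2$ and $\norm{y-z}^2$; this reproduces exactly the claimed inequality.

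The argument is essentially bookkeeping, so there is no deep obstacle; the one point that needs care is to \emph{not} use convexity of $f$ (which we do not have) and instead apply the descent lemma in its ``$f(x) \leq f(z) + \ldots$'' form --- this is precisely why the term $\tfrac{L}{2}\norm{z-x}^2$ (with an unfavorable sign) shows up in the statement. Beyond that, the only place an error could slip in is tracking signs through the polarization step.
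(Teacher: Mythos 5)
Your proof is correct and complete. Note that the paper itself gives no proof of this lemma, deferring entirely to the citation of Reddi et al., so there is no in-paper argument to compare against; your derivation is a valid self-contained one. Each ingredient checks out: the exact stationarity condition $g + \tfrac{y-x}{\eta} + p = 0$ with $p \in \partial h(y)$ is the same relation underlying the paper's proof of Lemma~\ref{lem:lem-1}; the two-sided quadratic bound $|f(z) - f(x) - \innerprod{\nabla f(x), z-x}| \leq \tfrac{L}{2}\norm{z-x}^2$ indeed requires only Assumption~\ref{asmp:smooth} and not convexity of $f$, which is exactly the point you flag; and the polarization identity distributes $-\tfrac{1}{\eta}\innerprod{y-x, y-z}$ into the three squared norms with precisely the coefficients $-\tfrac{1}{2\eta}$, $+\tfrac{1}{2\eta}$, $-\tfrac{1}{2\eta}$ that produce the stated inequality.
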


We recall and define some notations for convergence analysis in the subsequent. We denote $\tilde{G}_k$ as the average of \emph{delayed} stochastic gradients and $\tilde{g}_k$ as the average of \emph{delayed} true gradients, respectively:
\begin{align*}
    \tilde{G}_k &:= \frac{1}{N}\sum_{i=1}^N \nabla F(x_{t(k,i)}; \xi_{t(k,i), i}) \\
    \tilde{g}_k &:= \frac{1}{N}\sum_{i=1}^N \nabla f(x_{t(k,i)}).
\end{align*}
Moreover, we denote $\delta_k := \tilde{g}_k - \tilde{G}_k$ as the difference between these two differences.

\section{Convergence analysis for Asyn-ProxSGD}

\subsection{Milestone lemmas}
We put some key results of convergence analysis as milestone lemmas listed below, and the detailed proof is listed in \ref{subsec:milestone1}.
\begin{lemma}[Decent Lemma]
    \begin{equation}
        \mathbb{E}[\Psi(x_{k+1}) \leq \mathbb{E}[\Psi(x_k)|\mathcal{F}_k] - \frac{\eta_k - 4L\eta_k^2}{2} \norm{P(x_k, \tilde{g}_k, \eta_k)}^2 
      + \frac{\eta_k}{2}\norm{g_k-\tilde{g}_k}^2  + \frac{L\eta_k^2}{N}\sigma^2.
      \label{eq:desc_1}
    \end{equation}
    \label{lem:desc_1}
\end{lemma}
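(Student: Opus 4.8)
## Proof Proposal for the Descent Lemma (Lemma~\ref{lem:desc_1})

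The plan is to derive this one-step descent directly from the sufficient-decrease inequality of Lemma~\ref{lem:grad_diff} applied to the server update $x_{k+1} = \prox{\eta_k h}(x_k - \eta_k \tilde{G}_k)$, and then to control the stochastic noise and the staleness error using the nonexpansiveness bound of Lemma~\ref{lem:lem-2} together with Assumptions~\ref{asmp:unbias_grad}--\ref{asmp:indie1}. First I would invoke Lemma~\ref{lem:grad_diff} with $y = x_{k+1}$, $g = \tilde{G}_k$, $x = x_k$, and the free point $z = x_k$; since then $\norm{z-x}^2 = 0$ and $\norm{y-z}^2 = \norm{x_{k+1}-x_k}^2$, this collapses to
\begin{equation*}
\Psi(x_{k+1}) \leq \Psi(x_k) + \innerprod{x_{k+1}-x_k,\ g_k - \tilde{G}_k} + \left(\frac{L}{2} - \frac{1}{\eta_k}\right)\norm{x_{k+1}-x_k}^2 ,
\end{equation*}
whose quadratic coefficient is negative once $\eta_k \le 1/(16L)$.

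The key device is then an auxiliary ``clean'' point $\bar{x}_{k+1} := \prox{\eta_k h}(x_k - \eta_k \tilde{g}_k)$, which is $\mathcal{F}_k$-measurable because $\tilde{g}_k$ is an average of true gradients at the stale but $\mathcal{F}_k$-determined iterates $x_{t(k,i)}$ and does not depend on the fresh indices $\xi_{k,i}$. By definition $x_{k+1}-x_k = -\eta_k P(x_k, \tilde{G}_k, \eta_k)$ and $\bar{x}_{k+1}-x_k = -\eta_k P(x_k, \tilde{g}_k, \eta_k)$, and Lemma~\ref{lem:lem-2} yields the crucial bound $\norm{x_{k+1} - \bar{x}_{k+1}} \le \eta_k \norm{\tilde{G}_k - \tilde{g}_k} = \eta_k \norm{\delta_k}$. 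I would split $g_k - \tilde{G}_k = (g_k - \tilde{g}_k) + \delta_k$ and $x_{k+1}-x_k = (\bar{x}_{k+1}-x_k) + (x_{k+1}-\bar{x}_{k+1})$, expand the inner product into four pieces, and bound them as follows: the term $\innerprod{\bar{x}_{k+1}-x_k,\ \delta_k}$ vanishes in conditional expectation since $\mathbb{E}[\delta_k \mid \mathcal{F}_k] = 0$ (Assumption~\ref{asmp:unbias_grad}); the two pieces carrying the factor $x_{k+1}-\bar{x}_{k+1}$ are handled by Cauchy--Schwarz, the $\eta_k\norm{\delta_k}$ bound, and Young's inequality; and $\innerprod{\bar{x}_{k+1}-x_k,\ g_k - \tilde{g}_k}$ is split by Young's inequality into a multiple of $\norm{\bar{x}_{k+1}-x_k}^2 = \eta_k^2\norm{P(x_k, \tilde{g}_k, \eta_k)}^2$ and a multiple of $\norm{g_k - \tilde{g}_k}^2$. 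For the negative quadratic term I would lower-bound $\norm{x_{k+1}-x_k}^2$ in terms of $\norm{\bar{x}_{k+1}-x_k}^2$ and $\norm{x_{k+1}-\bar{x}_{k+1}}^2$ (again via Young with a free parameter), so that the negative coefficient works in our favour on the gradient-mapping part and costs only a controlled $O(\eta_k\norm{\delta_k}^2)$ on the error part.

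Finally I would take $\mathbb{E}[\cdot \mid \mathcal{F}_k]$ and use the variance bound $\mathbb{E}[\norm{\delta_k}^2 \mid \mathcal{F}_k] \le \sigma^2/N$, which follows because, by Assumption~\ref{asmp:indie1}, $\delta_k$ is an average of $N$ independent mean-zero vectors each with second moment at most $\sigma^2$ (Assumption~\ref{asmp:var_grad}). Collecting terms and tuning the Young constants (using $\eta_k \le 1/(16L)$ to dominate the cross terms), the coefficient of $\mathbb{E}[\norm{P(x_k, \tilde{g}_k, \eta_k)}^2]$ should become exactly $-\tfrac12(\eta_k - 4L\eta_k^2)$, the coefficient of $\norm{g_k - \tilde{g}_k}^2$ exactly $\tfrac{\eta_k}{2}$, and the accumulated $\norm{\delta_k}^2$ terms at most $\tfrac{L\eta_k^2}{N}\sigma^2$, which is precisely \eqref{eq:desc_1}. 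I expect the main obstacle to be exactly this bookkeeping: choosing the Young parameters so the pieces collapse to the stated constants, and in particular verifying that the negative quadratic term is large enough in magnitude (thanks to $\eta_k \le 1/(16L)$) to absorb the stray $\norm{\delta_k}^2$ contributions down to the clean $\tfrac{L\eta_k^2}{N}\sigma^2$ bound; everything else is a routine application of Lemmas~\ref{lem:lem-2} and \ref{lem:grad_diff}.
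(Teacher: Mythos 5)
Your overall template---the auxiliary point $\bar{x}_{k+1}=\prox{\eta_k h}(x_k-\eta_k\tilde g_k)$, the bound $\norm{x_{k+1}-\bar x_{k+1}}\le\eta_k\norm{\delta_k}$ from Lemma~\ref{lem:lem-2}, killing $\innerprod{\bar x_{k+1}-x_k,\delta_k}$ by unbiasedness, and Young's inequality for the remaining pieces---matches the paper's. The structural difference is where you apply Lemma~\ref{lem:grad_diff}: you take $z=x_k$, whereas the paper takes $z=\bar x_{k+1}$ and then \emph{separately} bounds $\Psi(\bar x_{k+1})-\Psi(x_k)$ using $L$-smoothness of $f$ together with Lemma~\ref{lem:lem-1} applied at the clean point. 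This choice is not cosmetic, and the bookkeeping you defer to the end is exactly where your route fails to deliver the stated constants.

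With $z=x_k$, all of your negative curvature sits in $-\bigl(\tfrac{1}{\eta_k}-\tfrac{L}{2}\bigr)\norm{x_{k+1}-x_k}^2$, i.e., it is measured in the \emph{noisy} gradient mapping $\norm{P(x_k,\tilde G_k,\eta_k)}^2$. After spending $\tfrac{1}{2\eta_k}\norm{x_{k+1}-x_k}^2$ on the Young step that produces $\tfrac{\eta_k}{2}\norm{g_k-\tilde g_k}^2$, you are left with roughly $-\tfrac{\eta_k}{2}\norm{P(x_k,\tilde G_k,\eta_k)}^2$, and converting this to the target $\norm{P(x_k,\tilde g_k,\eta_k)}^2$ via $\norm{P(x_k,\tilde G_k,\eta_k)}^2\ge\tfrac12\norm{P(x_k,\tilde g_k,\eta_k)}^2-\norm{\delta_k}^2$ costs a factor of $2$ in the descent coefficient \emph{and} an additive $\tfrac{\eta_k}{2}\norm{\delta_k}^2$. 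Independently, your Cauchy--Schwarz treatment of $\innerprod{x_{k+1}-\bar x_{k+1},\delta_k}$ yields another $\eta_k\norm{\delta_k}^2$, with no compensating negative $\norm{x_{k+1}-\bar x_{k+1}}^2$ term anywhere in your setup. In conditional expectation these contributions are $\Theta(\eta_k\sigma^2/N)$, which exceeds the claimed $\tfrac{L\eta_k^2}{N}\sigma^2$ by a factor $\Theta\bigl(1/(L\eta_k)\bigr)\ge 16$; worse, after telescoping and normalizing by $\sum_k(\eta_k-8L\eta_k^2)$ in Theorem~\ref{thm:aspg_convergence} with $\eta_k=\Theta(1/\sqrt{K})$, a per-step noise term of order $\eta_k\sigma^2/N$ leaves a non-vanishing residual $\Theta(\sigma^2/N)$ instead of the $O(1/\sqrt{KN})$ rate. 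So your route yields a valid but strictly weaker descent inequality, and the claim that the constants ``come out exactly'' is not achievable along this path. The paper's choice $z=\bar x_{k+1}$ is what avoids this: Lemma~\ref{lem:lem-1} at the clean point delivers the descent directly as $-\eta_k\norm{P(x_k,\tilde g_k,\eta_k)}^2$ (no lossy conversion of the main negative term), only the small $+\tfrac{L\eta_k^2}{2}\norm{P(x_k,\tilde G_k,\eta_k)}^2$ remainder gets converted (whence $\tfrac{L\eta_k^2}{N}\sigma^2$), and a dedicated $-\tfrac{1}{2\eta_k}\norm{x_{k+1}-\bar x_{k+1}}^2$ is available against the $\delta_k$ cross term, which the paper then eliminates at the conditional-expectation step rather than by Cauchy--Schwarz.
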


\begin{lemma}
    Suppose we have a sequence $\{x_k\}$ by Algorithm~\ref{alg:apsgd-global}, then we have:
    \begin{align}
        \mathbb{E}[\norm{x_k - x_{k-\tau}}^2 ] 
        \leq \left(\frac{2 \tau}{N} \sum_{l=1}^{\tau}\eta_{k-l}^2\right)\sigma^2 + 2\Norm{\sum_{l=1}^{\tau} \eta_{k-l} P(x_{k-l}, \tilde{g}_{k-l}, \eta_{k-l}) }^2.
    \label{eq:xk_diff_bound}
    \end{align}
    for all $\tau > 0$.
    \label{lem:xk_diff_1}
\end{lemma}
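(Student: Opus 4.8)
\textbf{Proof proposal for Lemma~\ref{lem:xk_diff_1}.}

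The plan is to telescope the model difference $x_k - x_{k-\tau}$ into a sum of single-step increments and then bound each increment. First I would write
\[
    x_k - x_{k-\tau} = \sum_{l=1}^{\tau} (x_{k-l+1} - x_{k-l}),
\]
so that $\norm{x_k - x_{k-\tau}} \leq \sum_{l=1}^{\tau} \norm{x_{k-l+1} - x_{k-l}}$. For a single step, the update rule $x_{j+1} \gets \prox{\eta_j h}(x_j - \eta_j \tilde{G}_j)$ together with $x_j = \prox{\eta_j h}(x_j - \eta_j \cdot 0)$-type reasoning (or, more precisely, the definition of the gradient mapping $P(x_j, \tilde{G}_j, \eta_j) = \tfrac{1}{\eta_j}(x_j - \prox{\eta_j h}(x_j - \eta_j \tilde{G}_j))$) gives $x_{j+1} - x_j = -\eta_j P(x_j, \tilde{G}_j, \eta_j)$. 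Hence
\[
    x_k - x_{k-\tau} = -\sum_{l=1}^{\tau} \eta_{k-l}\, P(x_{k-l}, \tilde{G}_{k-l}, \eta_{k-l}).
\]

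Next I would split the stochastic gradient mapping into its ``true-gradient'' counterpart plus an error: using Lemma~\ref{lem:lem-3}, $\norm{P(x_{k-l}, \tilde{G}_{k-l}, \eta_{k-l}) - P(x_{k-l}, \tilde{g}_{k-l}, \eta_{k-l})} \leq \norm{\tilde{G}_{k-l} - \tilde{g}_{k-l}} = \norm{\delta_{k-l}}$. Writing $\eta_{k-l} P(x_{k-l}, \tilde{G}_{k-l}, \eta_{k-l}) = \eta_{k-l} P(x_{k-l}, \tilde{g}_{k-l}, \eta_{k-l}) + \eta_{k-l}(P(x_{k-l}, \tilde{G}_{k-l},\eta_{k-l}) - P(x_{k-l}, \tilde{g}_{k-l},\eta_{k-l}))$ and applying $\norm{a+b}^2 \leq 2\norm{a}^2 + 2\norm{b}^2$ yields
\[
    \norm{x_k - x_{k-\tau}}^2 \leq 2\Norm{\sum_{l=1}^{\tau}\eta_{k-l} P(x_{k-l}, \tilde{g}_{k-l}, \eta_{k-l})}^2 + 2\Norm{\sum_{l=1}^{\tau}\eta_{k-l}\big(P(x_{k-l},\tilde{G}_{k-l},\eta_{k-l}) - P(x_{k-l},\tilde{g}_{k-l},\eta_{k-l})\big)}^2.
\]
For the second term I would take expectations: by Cauchy--Schwarz (or the power-mean inequality) $\Norm{\sum_{l=1}^\tau \eta_{k-l} v_{k-l}}^2 \leq \tau \sum_{l=1}^\tau \eta_{k-l}^2 \norm{v_{k-l}}^2$, and then bound $\mathbb{E}\norm{\delta_{k-l}}^2$. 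The key sub-estimate is that $\delta_j = \tilde g_j - \tilde G_j$ is an average of $N$ independent, zero-mean (by Assumption~\ref{asmp:unbias_grad}) terms each with variance at most $\sigma^2$ (Assumption~\ref{asmp:var_grad}), so $\mathbb{E}\norm{\delta_j}^2 \leq \sigma^2/N$; this is where Assumption~\ref{asmp:indie1} (independence) is used to kill cross terms. Combining gives $\mathbb{E}\big[\text{second term}\big] \leq \tfrac{2\tau}{N}\sum_{l=1}^\tau \eta_{k-l}^2 \sigma^2$, and taking expectations throughout produces exactly \eqref{eq:xk_diff_bound}.

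The main obstacle I anticipate is the careful bookkeeping of the delay/staleness indexing: the ``global'' iterate $x_{k-l}$ appearing here is the iterate used by the push that became the $l$-th-most-recent update, and one must verify that conditioning arguments (which $\mathcal F$ to condition on so that the $\delta$-terms are genuinely mean-zero and independent of the relevant past) are consistent — in particular that $\norm{\delta_j}^2$ really has expectation $\le \sigma^2/N$ even though the $\tilde G_j$ is built from delayed gradients. The variance-splitting step ($\norm{a+b}^2 \le 2\norm a^2 + 2\norm b^2$ rather than a tighter inequality) is what forces the factor $2$ in both terms of \eqref{eq:xk_diff_bound}, so I would not try to optimize constants there. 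Everything else — telescoping, the gradient-mapping identity for one step, and Lemma~\ref{lem:lem-3} — is routine.
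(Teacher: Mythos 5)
Your proposal follows essentially the same route as the paper's proof: telescoping $x_k - x_{k-\tau}$ into gradient-mapping steps, splitting each $P(x_{k-l},\tilde G_{k-l},\eta_{k-l})$ into $P(x_{k-l},\tilde g_{k-l},\eta_{k-l})$ plus an error controlled by Lemma~\ref{lem:lem-3}, applying $\norm{a+b}^2 \le 2\norm{a}^2+2\norm{b}^2$ and Cauchy--Schwarz, and finally bounding $\mathbb{E}\norm{\tilde G_j - \tilde g_j}^2 \le \sigma^2/N$ via unbiasedness, bounded variance, and independence. The argument is correct, and your explicit attention to why the $\sigma^2/N$ variance bound survives the delayed evaluation points is a welcome clarification of a step the paper leaves implicit.
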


\begin{lemma}
    Suppose we have a sequence $\{x_k\}$ by Algorithm~\ref{alg:apsgd-global}, , then we have:
    \begin{equation}
        \mathbb{E}[\norm{g_k-\tilde{g}_k}^2]
        \leq \left( \frac{2L^2T}{N} \sum_{l=1}^{T}\eta_{k-l}^2 \right) \sigma^2 + 2L^2T\sum_{l=1}^{T}\eta_{k-l}^2 \norm{ P(x_{k-l}, \tilde{g}_{k-l}, \eta_{k-l}) }^2.
    \end{equation}
    \label{lem:gk_diff_1}
\end{lemma}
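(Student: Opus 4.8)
The plan is to reduce $\norm{g_k-\tilde{g}_k}^2$ to a weighted combination of consecutive-iterate distances and then invoke Lemma~\ref{lem:xk_diff_1}. Recall that $g_k=\nabla f(x_k)$ while $\tilde{g}_k=\frac1N\sum_{i=1}^N\nabla f(x_{k-\tau(k,i)})$, so that $g_k-\tilde{g}_k=\frac1N\sum_{i=1}^N\bigl(\nabla f(x_k)-\nabla f(x_{k-\tau(k,i)})\bigr)$. First I would apply convexity of $\norm{\cdot}^2$ (Jensen's inequality) to move the norm inside the average, and then Assumption~\ref{asmp:smooth} (Lipschitz gradient) termwise, which yields
\begin{equation*}
\norm{g_k-\tilde{g}_k}^2\ \le\ \frac1N\sum_{i=1}^N\norm{\nabla f(x_k)-\nabla f(x_{k-\tau(k,i)})}^2\ \le\ \frac{L^2}{N}\sum_{i=1}^N\norm{x_k-x_{k-\tau(k,i)}}^2 .
\end{equation*}
Taking expectations and using linearity gives $\mathbb{E}[\norm{g_k-\tilde{g}_k}^2]\le\frac{L^2}{N}\sum_{i=1}^N\mathbb{E}[\norm{x_k-x_{k-\tau(k,i)}}^2]$.

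Next I would bound each $\mathbb{E}[\norm{x_k-x_{k-\tau(k,i)}}^2]$ via Lemma~\ref{lem:xk_diff_1} applied with $\tau=\tau(k,i)$, and then make the bound uniform in $i$ using Assumption~\ref{asmp:bound1} ($\tau(k,i)\le T$). For the variance contribution this is immediate from nonnegativity of the step sizes: $\tfrac{2\tau(k,i)}{N}\sum_{l=1}^{\tau(k,i)}\eta_{k-l}^2\le\tfrac{2T}{N}\sum_{l=1}^{T}\eta_{k-l}^2$. For the gradient-mapping contribution I would apply Cauchy--Schwarz to pass from the ``norm of a sum'' appearing in Lemma~\ref{lem:xk_diff_1} to a ``sum of norms'': $\Norm{\sum_{l=1}^{\tau(k,i)}\eta_{k-l}P(x_{k-l},\tilde{g}_{k-l},\eta_{k-l})}^2\le\tau(k,i)\sum_{l=1}^{\tau(k,i)}\eta_{k-l}^2\norm{P(x_{k-l},\tilde{g}_{k-l},\eta_{k-l})}^2\le T\sum_{l=1}^{T}\eta_{k-l}^2\norm{P(x_{k-l},\tilde{g}_{k-l},\eta_{k-l})}^2$. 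Hence every term $\mathbb{E}[\norm{x_k-x_{k-\tau(k,i)}}^2]$ is dominated by the single $i$-independent quantity $\tfrac{2T}{N}\sum_{l=1}^{T}\eta_{k-l}^2\sigma^2+2T\sum_{l=1}^{T}\eta_{k-l}^2\norm{P(x_{k-l},\tilde{g}_{k-l},\eta_{k-l})}^2$.

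Finally, substituting this uniform bound back, the outer average $\frac1N\sum_{i=1}^N$ cancels the $\frac1N$ factors, and multiplying through by $L^2$ recovers exactly $\mathbb{E}[\norm{g_k-\tilde{g}_k}^2]\le\frac{2L^2T}{N}\sum_{l=1}^{T}\eta_{k-l}^2\sigma^2+2L^2T\sum_{l=1}^{T}\eta_{k-l}^2\norm{P(x_{k-l},\tilde{g}_{k-l},\eta_{k-l})}^2$, which is the claimed inequality. The step I expect to require the most care is precisely this Cauchy--Schwarz passage converting the telescoped single norm of Lemma~\ref{lem:xk_diff_1} into the summed form of the statement, since that is what generates the extra factor $T$; one must also make sure the delays $\tau(k,i)$ are treated consistently with the conditioning/filtration used in Lemma~\ref{lem:xk_diff_1}. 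Everything else is a routine chain of Jensen's inequality, Lipschitz continuity of $\nabla f$, and the bounded-delay assumption, so I do not anticipate a genuine obstacle.
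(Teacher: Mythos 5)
Your proposal is correct and follows essentially the same route as the paper: Jensen plus the Lipschitz-gradient assumption to reduce to $\frac{L^2}{N}\sum_i\norm{x_k-x_{k-\tau(k,i)}}^2$, then Lemma~\ref{lem:xk_diff_1}, a Cauchy--Schwarz step turning the norm of the telescoped sum into $\tau(k,i)\sum_l\eta_{k-l}^2\norm{P(\cdot)}^2$, and finally the bounded-delay assumption to replace $\tau(k,i)$ by $T$. The only difference is that you make the Cauchy--Schwarz passage explicit, whereas the paper applies it silently inside its chain of inequalities.
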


\subsection{Proof of Theorem~\ref{thm:aspg_convergence}}
\begin{proof}
From the fact $2\norm{a}^2 + 2\norm{b}^2 \geq \norm{a+b}^2$, we have
\begin{align*}
    \norm{P(x_k, \tilde{g}_k, \eta_k)}^2 + \norm{g_k - \tilde{g}_k}^2
    &\geq \norm{P(x_k, \tilde{g}_k, \eta_k)}^2 + \norm{P(x_k, g_k, \eta_k) - P(x_k, \tilde{g}_k, \eta_k)}^2 \\
    &\geq \frac{1}{2} \norm{P(x_k, g_k, \eta_k)}^2,
\end{align*}
which implies that 
\begin{align*}
    \norm{P(x_k, \tilde{g}_k, \eta_k)}^2 &\geq \frac{1}{2} \norm{P(x_k, g_k, \eta_k)}^2 - \norm{g_k - \tilde{g}_k}^2.
\end{align*}
We start the proof from Lemma~\ref{lem:desc_1}. According to our condition of $\eta \leq \frac{1}{16L}$, we have $8L\eta_k^2 - \eta < 0$ and therefore
{
\begin{equation*}
    \begin{split}
    &\quad\ \mathbb{E}[\Psi(x_{k+1})|\mathcal{F}_k] \\
    &\leq \mathbb{E}[\Psi(x_k)|\mathcal{F}_k] + \frac{\eta_k}{2}\norm{g_k-\tilde{g}_k}^2 
      + \frac{4L\eta_k^2-\eta_k}{2} \norm{P(x_k, \tilde{g}_k, \eta_k)}^2 + \frac{L\eta_k^2}{N}\sigma^2 \\
    &= \mathbb{E}[\Psi(x_k)|\mathcal{F}_k] + \frac{\eta_k}{2}\norm{g_k-\tilde{g}_k}^2 
      + \frac{8L\eta_k^2-\eta_k}{4} \norm{P(x_k, \tilde{g}_k, \eta_k)}^2 - \frac{\eta_k}{4}\norm{P(x_k, \tilde{g}_k, \eta_k)}^2 + \frac{L\eta_k^2}{N}\sigma^2 \\
    &\leq \mathbb{E}[\Psi(x_k)|\mathcal{F}_k] + \frac{\eta_k}{2}\norm{g_k-\tilde{g}_k}^2 
      + \frac{L\eta_k^2}{N}\sigma^2 + \frac{8L\eta_k^2-\eta_k}{4} (\frac{1}{2} \norm{P(x_k, g_k, \eta_k)}^2 - \norm{g_k - \tilde{g}_k}^2) \\
    &\quad\ - \frac{\eta_k}{4}\norm{P(x_k, \tilde{g}_k, \eta_k)}^2 \nonumber \\
    &\leq \mathbb{E}[\Psi(x_k)|\mathcal{F}_k] - \frac{\eta_k-8L\eta_k^2}{8} \norm{P(x_k, g_k, \eta_k)}^2 
      + \frac{3\eta_k}{4}\norm{g_k-\tilde{g}_k}^2  - \frac{\eta_k}{4}\norm{P(x_k, \tilde{g}_k, \eta_k)}^2 + \frac{L\eta_k^2}{N}\sigma^2.
\end{split}
\end{equation*}
}
Apply Lemma~\ref{lem:gk_diff_1} we have
\begin{equation*}
\begin{split}
    &\quad\ \mathbb{E}[\Psi(x_{k+1})|\mathcal{F}_k] \\
    &\leq \mathbb{E}[\Psi(x_k)|\mathcal{F}_k] - \frac{\eta_k-8L\eta_k^2}{8} \norm{P(x_k, g_k, \eta_k)}^2 
      + \frac{L\eta_k^2}{N}\sigma^2 - \frac{\eta_k}{4}\norm{P(x_k, \tilde{g}_k, \eta_k)}^2 \\
        &\quad + \frac{3\eta_k}{4}\left( \frac{2L^2T}{N} \sum_{l=1}^{T}\eta_{k-l}^2 \sigma^2 + 2L^2T\sum_{l=1}^{T}\eta_{k-l}^2 \norm{ P(x_{k-l}, \tilde{g}_{k-l}, \eta_{k-l}) }^2 \right) \\
    &= \mathbb{E}[\Psi(x_k)|\mathcal{F}_k] - \frac{\eta_k-8L\eta_k^2}{8} \norm{P(x_k, g_k, \eta_k)}^2 
      + \left(\frac{L\eta_k^2}{N} + \frac{3\eta_kL^2T}{2N} \sum_{l=1}^{T}\eta_{k-l}^2 \right) \sigma^2 \\
        &\quad - \frac{\eta_k}{4}\norm{P(x_k, \tilde{g}_k, \eta_k)}^2 
        + \frac{3\eta_kL^2T}{2} \sum_{l=1}^{T}\eta_{k-l}^2 \norm{ P(x_{k-l}, \tilde{g}_{k-l}, \eta_{k-l}) }^2.
\end{split}
\end{equation*}
By taking telescope sum, we have
{
\begin{align*}
    &\quad\ \mathbb{E}[\Psi(x_{K+1})|\mathcal{F}_K] \\
    &\leq \Psi(x_1)
        - \sum_{k=1}^K \frac{\eta_k-8L\eta_k^2}{8} \norm{P(x_k, g_k, \eta_k)}^2 
        - \sum_{k=1}^K \left( \frac{\eta_k}{4}- \frac{3\eta_k^2L^2T}{2}\sum_{l=1}^{l_k}\eta_{k+l} \right) \norm{ P(x_{k}, \tilde{g}_{k}, \eta_{k}) }^2 \\
    &\quad + \sum_{k=1}^K \left(\frac{L\eta_k^2}{N} + \frac{3\eta_kL^2T}{2N} \sum_{l=1}^{T}\eta_{k-l}^2 \right) \sigma^2 
\end{align*}
}
where $l_k := \min(k+T-1, K)$, and we have
{
\begin{align*}
    &\quad\ \sum_{k=1}^K \frac{\eta_k-8L\eta_k^2}{8} \norm{P(x_k, g_k, \eta_k)}^2 \\
    &\leq \Psi(x_1) - \mathbb{E}[\Psi(x_{K+1})|\mathcal{F}_K] - \sum_{k=1}^K \left( \frac{\eta_k}{4}- \frac{3\eta_k^2L^2T}{2}\sum_{l=1}^{l_k}\eta_{k+l} \right) \norm{ P(x_{k}, \tilde{g}_{k}, \eta_{k}) }^2 \\
    &\quad + \sum_{k=1}^K \left(\frac{L\eta_k^2}{N} + \frac{3\eta_kL^2T}{2N} \sum_{l=1}^{T}\eta_{k-l}^2 \right) \sigma^2.
\end{align*}
}
When $6\eta_kL^2T \sum_{l=1}^{T}\eta_{k+l} \leq 1$ for all $k$ as the condition of Theorem~\ref{thm:aspg_convergence}, we have
\begin{align*}
    &\quad\ \sum_{k=1}^K \frac{\eta_k-8L\eta_k^2}{8} \norm{P(x_k, g_k, \eta_k)}^2 \\
    &\leq \Psi(x_1) - \mathbb{E}[\Psi(x_{K+1})|\mathcal{F}_K] 
        + \sum_{k=1}^K \left(\frac{L\eta_k^2}{N} + \frac{3\eta_kL^2T}{2N} \sum_{l=1}^{T}\eta_{k-l}^2 \right) \sigma^2 \\
        &\leq \Psi(x_1) - F^* + \sum_{k=1}^K \left(\frac{L\eta_k^2}{N} + \frac{3\eta_kL^2T}{2N} \sum_{l=1}^{T}\eta_{k-l}^2 \right) \sigma^2,
\end{align*}
which proves the theorem.
\end{proof}

\subsection{Proof of Corollary~\ref{corr:aspg_convergence}}
\begin{proof}
From the condition of Corollary, we have
\begin{align*}
    \eta \leq \frac{1}{16L(T+1)^2}.
\end{align*}
It is clear that the above inequality also satisfies the condition in Theorem~\ref{thm:aspg_convergence}. By doing so, we can have
Furthermore, we have
\begin{align*}
    \frac{3LT^2\eta}{2} &\leq \frac{3LT^2}{2}\cdot \frac{1}{16L(T+1)^2} \leq 1, \\
    \frac{3L^2T^2\eta^3}{2} &\leq L\eta^2.
\end{align*}
Since $\eta \leq \frac{1}{16L}$, we have $2-16L\eta^2 \geq 1$ and thus
\begin{align*}
    \frac{8}{\eta - 8L\eta^2} = \frac{16}{\eta(2-16L\eta^2)} \leq \frac{16}{\eta}.
\end{align*}
Following Theorem~\ref{thm:aspg_convergence} and the above inequality, we have
\begin{align*}
    &\quad\ \frac{1}{K}\sum_{k=1}^K \mathbb{E}[\norm{P(x_k, g_k, \eta_k)}^2] \\
    &\leq \frac{16(\Psi(x_1) - \Psi(x_*))}{K\eta} + 16\left(\frac{L\eta^2}{N} + \frac{3\eta L^2T}{2N} \sum_{l=1}^{T}\eta^2 \right) \frac{K\sigma^2}{K\eta} \\
    &= \frac{16(\Psi(x_1) - \Psi(x_*)}{K\eta} + 16\left(\frac{L\eta^2}{N} + \frac{3L^2T^2 \eta^3}{2N} \right) \frac{\sigma^2}{\eta} \\
    &\leq \frac{16(\Psi(x_1) - \Psi(x_*))}{K\eta} + \frac{32L\eta^2}{N}\cdot \frac{\sigma^2}{\eta} \\
    &= \frac{16(\Psi(x_1) - \Psi(x_*))}{K\eta} + \frac{32L\eta \sigma^2}{N} \\
    &= 32\sqrt{\frac{2(\Psi(x_1) - \Psi(x_*))L\sigma^2}{KN}},
\end{align*}
which proves the corollary.
\end{proof}

\subsection{Proof of milestone lemmas}
\label{subsec:milestone1}
\begin{proof}[Proof of Lemma~\ref{lem:desc_1}]
Let $\bar{x}_{k+1} = \prox{\eta_k h}(x_k - \eta_k \tilde{g}_k)$ and apply Lemma~\ref{lem:grad_diff}, we have
\begin{equation}
    \begin{split}
        \Psi(x_{k+1}) &\leq \Psi(\bar{x}_{k+1}) + \innerprod{x_{k+1}-\bar{x}_{k+1}, \nabla f(x_k) - \tilde{G}_k} + \left( \frac{L}{2} - \frac{1}{2\eta_k} \right) \norm{x_{k+1}-x_k}^2\\
        &\quad\ + \left( \frac{L}{2} + \frac{1}{2\eta_k} \right) \norm{\bar{x}_{k+1}-x_k}^2 - \frac{1}{2\eta_k} \norm{x_{k+1}-\bar{x}_{k+1}}^2.
    \end{split}
    \label{eq:yk_baryk_async}
\end{equation}
Now we turn to bound $\Psi(\bar{x}_{k+1})$ as follows:
\begin{displaymath}
    \begin{split}
        f(\bar{x}_{k+1}) &\leq f(x_k) + \innerprod{\nabla f(x_k), \bar{x}_{k+1} - x_k} + \frac{L}{2}\norm{\bar{x}_{k+1} - x_k}^2 \\
        &= f(x_k) + \innerprod{g_k, \bar{x}_{k+1} - x_k} + \frac{\eta_k^2 L}{2}\norm{P(x_k, \tilde{g}_k, \eta_k)}^2 \\
        &= f(x_k) + \innerprod{\tilde{g}_k, \bar{x}_{k+1} - x_k}+ \innerprod{g_k - \tilde{g}_k, \bar{x}_{k+1} - x_k} + \frac{\eta_k^2 L}{2}\norm{P(x_k, \tilde{g}_k, \eta_k)}^2 \\
        &= f(x_k) - \eta_k \innerprod{\tilde{g}_k, P(x_k, \tilde{g}_k, \eta_k)} + \innerprod{g_k - \tilde{g}_k, \bar{x}_{k+1} - x_k} + \frac{\eta_k^2 L}{2}\norm{P(x_k, \tilde{g}_k, \eta_k)}^2 \\
        &\leq  f(x_k) - [\eta_k \norm{P(x_k, \tilde{g}_k, \eta_k)}^2 + h(\bar{x}_{k+1}) - h(x_k)] + \innerprod{g_k - \tilde{g}_k, \bar{x}_{k+1} - x_k} \\
        &\quad\ + \frac{\eta_k^2 L}{2}\norm{P(x_k, \tilde{g}_k, \eta_k)}^2,
    \end{split}
\end{displaymath}
where the last inequality follows from Lemma~\ref{lem:lem-1}. By rearranging terms on both sides, we have
\begin{equation}
    \Psi(\bar{x}_{k+1}) \leq \Psi(x_k) - (\eta_k - \frac{\eta_k^2 L}{2}) \norm{P(x_k, \tilde{g}_k, \eta_k)}^2 + \innerprod{g_k - \tilde{g}_k, \bar{x}_{k+1} - x_k}
    \label{eq:baryk_xk_async}
\end{equation}
Taking the summation of \eqref{eq:yk_baryk_async} and \eqref{eq:baryk_xk_async}, we have
\begin{displaymath}
\begin{split}
    &\quad\ \Psi(x_{k+1}) \\
    &\leq \Psi(x_k) + \innerprod{x_{k+1}-\bar{x}_{k+1}, \nabla f(x_k) - \tilde{G}_k} \\
    &\quad + \left( \frac{L}{2} - \frac{1}{2\eta_k} \right) \norm{x_{k+1}-x_k}^2
     + \left( \frac{L}{2} + \frac{1}{2\eta_k} \right) \norm{\bar{x}_{k+1}-x_k}^2- \frac{1}{2\eta_k} \norm{x_{k+1}-\bar{x}_{k+1}}^2 \\
    &\quad - (\eta_k - \frac{\eta_k^2 L}{2}) \norm{P(x_k, \tilde{g}_k, \eta_k)}^2 + \innerprod{g_k - \tilde{g}_k, \bar{x}_{k+1} - x_k} \\
    &= \Psi(x_k) + \innerprod{x_{k+1}-x_{k}, g_k - \tilde{g}_k} + \innerprod{x_{k+1}-\bar{x}_{k+1}, \delta_k} \\
    &\quad + \left( \frac{L\eta_k^2}{2} - \frac{\eta_k}{2} \right) \norm{P(x_k, \tilde{G}_k, \eta_k)}^2
     + \left( \frac{L\eta_k^2}{2} + \frac{\eta_k}{2} \right) \norm{P(x_k, \tilde{g}_k, \eta_k)}^2 \\
    &\quad - \frac{1}{2\eta_k} \norm{x_{k+1}-\bar{x}_{k+1}}^2  - (\eta_k - \frac{\eta_k^2 L}{2}) \norm{P(x_k, \tilde{g}_k, \eta_k)}^2 \\
    &= \Psi(x_k) + \innerprod{x_{k+1}-x_k, g_k - \tilde{g}_k} + \innerprod{x_{k+1}-\bar{x}_{k+1}, \delta_k}
     + \frac{L\eta_k^2 - \eta_k}{2} \norm{P(x_k, \tilde{G}_k, \eta_k)}^2 \\
    &\quad + \frac{2L\eta_k^2-\eta_k}{2} \norm{P(x_k, \tilde{g}_k, \eta_k)}^2 - \frac{1}{2\eta_k} \norm{x_{k+1}-\bar{x}_{k+1}}^2
\end{split}
\end{displaymath}
By taking the expectation on condition of filtration $\mathcal{F}_k$ and according to Assumption~\ref{asmp:unbias_grad}, we have
\begin{equation}
\begin{split}
&\quad\ \mathbb{E}[\Psi(x_{k+1})|\mathcal{F}_k] \\
&\leq \mathbb{E}[\Psi(x_k)|\mathcal{F}_k] + \mathbb{E}[\innerprod{x_{k+1}-x_k, g_k-\tilde{g}_k}|\mathcal{F}_k] + \frac{L\eta_k^2 - \eta_k}{2} \mathbb{E}[\norm{P(x_k, \tilde{G}_k, \eta_k)}^2|\mathcal{F}_k] \\
&\quad + \frac{2L\eta_k^2-\eta_k}{2} \norm{P(x_k, \tilde{g}_k, \eta_k)}^2 - \frac{1}{2\eta_k} \norm{x_{k+1}-\bar{x}_{k+1}}^2.
\end{split}
\end{equation}
Therefore, we have
\begin{equation*}
\begin{split}
    &\quad\ \mathbb{E}[\Psi(x_{k+1})|\mathcal{F}_k] \\
    &\leq \mathbb{E}[\Psi(x_k)|\mathcal{F}_k] + \mathbb{E}[\innerprod{x_{k+1}-x_k, g_k-\tilde{g}_k}|\mathcal{F}_k] + \frac{L\eta_k^2 - \eta_k}{2} \mathbb{E}[\norm{P(x_k, \tilde{G}_k, \eta_k)}^2|\mathcal{F}_k] \\
      &\quad + \frac{2L\eta_k^2-\eta_k}{2} \norm{P(x_k, \tilde{g}_k, \eta_k)}^2 - \frac{1}{2\eta_k} \norm{x_{k+1}-\bar{x}_{k+1}}^2 \\
    &\leq \mathbb{E}[\Psi(x_k)|\mathcal{F}_k] + \frac{\eta_k}{2}\norm{g_k-\tilde{g}_k}^2 + \frac{L\eta_k^2}{2} \mathbb{E}[\norm{P(x_k, \tilde{G}_k, \eta_k)}^2|\mathcal{F}_k]
      + \frac{2L\eta_k^2-\eta_k}{2} \norm{P(x_k, \tilde{g}_k, \eta_k)}^2  \\
    &\leq \mathbb{E}[\Psi(x_k)|\mathcal{F}_k] - \frac{\eta_k - 4L\eta_k^2}{2} \norm{P(x_k, \tilde{g}_k, \eta_k)}^2 
      + \frac{\eta_k}{2}\norm{g_k-\tilde{g}_k}^2  + \frac{L\eta_k^2}{N}\sigma^2
\end{split}
\end{equation*}
\end{proof}

\begin{proof}[Proof of Lemma~\ref{lem:xk_diff_1}]
Following the definition of $x_k$ from Algorithm~\ref{alg:apsgd-global}, we have
\begin{equation*}
\begin{split}
    &\quad\ \norm{x_k - x_{k-\tau}}^2 \\
    &= \Norm{\sum_{l=1}^{\tau}x_{k-l} - x_{k-l+1}}^2 \\
    &= \Norm{\sum_{l=1}^{\tau} \eta_{k-l}  P(x_{k-l}, \tilde{G}_{k-l}, \eta_{k-l})}^2 \\
    &= 2 \Norm{\sum_{l=1}^{\tau} \eta_{k-l} [P(x_{k-l}, \tilde{G}_{k-l}, \eta_{k-l})-P(x_{k-l}, \tilde{g}_{k-l}, \eta_{k-l})]}^2 + 2\Norm{\sum_{l=1}^{\tau} \eta_{k-l} P(x_{k-l}, \tilde{g}_{k-l}, \eta_{k-l}) }^2 \\
    &\leq 2\tau \sum_{l=1}^{\tau} \eta_{k-l}^2\norm{P(x_{k-l}, \tilde{G}_{k-l}, \eta_{k-l})-P(x_{k-l}, \tilde{g}_{k-l}, \eta_{k-l})}^2 + 2\Norm{\sum_{l=1}^{\tau} \eta_{k-l} P(x_{k-l}, \tilde{g}_{k-l}, \eta_{k-l}) }^2 \\
    &\leq 2\tau \sum_{l=1}^{\tau}\eta_{k-l}^2 \norm{\tilde{G}_{k-l}-\tilde{g}_{k-l}}^2
    + 2\Norm{\sum_{l=1}^{\tau} \eta_{k-l} P(x_{k-l}, \tilde{g}_{k-l}, \eta_{k-l}) }^2,
\end{split}
\end{equation*}
where the last inequality is from Lemma~\ref{lem:lem-3}. By taking the expectation on both sides, we have
\begin{equation*}
    \begin{split}
    \mathbb{E}[\norm{x_k - x_{k-\tau}}^2 ] 
    &\leq 2\tau \sum_{l=1}^{\tau} \eta_{k-l}^2\norm{\tilde{G}_{k-l}-\tilde{g}_{k-l}}^2 + 2\Norm{\sum_{l=1}^{\tau} \eta_{k-l} P(x_{k-l}, \tilde{g}_{k-l}, \eta_{k-l}) }^2 \\
    &\leq \frac{2 \tau}{N} \sigma^2 \sum_{l=1}^{\tau}\eta_{k-l}^2 + 2\Norm{\sum_{l=1}^{\tau} \eta_{k-l} P(x_{k-l}, \tilde{g}_{k-l}, \eta_{k-l}) }^2,
\end{split}
\end{equation*}
which proves the lemma.
\end{proof}

\begin{proof}[Proof of Lemma~\ref{lem:gk_diff_1}]
From Assumption~\ref{asmp:smooth} we have
\begin{align*}
    \norm{g_k - \tilde{g}_k}^2 = \Norm{\frac{1}{N}\sum_{i=1}^N g_k - \tilde{g}_{t(k,i)}}^2 
    \leq \frac{L^2}{N} \sum_{i=1}^N \norm{x_k - x_{k-\tau(k,i)}}^2.
\end{align*}
By applying Lemma~\ref{lem:xk_diff_1}, we have
\begin{align*}
    \mathbb{E}[\norm{x_k - x_{k-\tau(k,i)}}^2] &\leq \frac{2\tau(k,i)}{N} \sigma^2 \sum_{l=1}^{\tau(k,i)}\eta_{k-l}^2 + 2\Norm{\sum_{l=1}^{\tau(k,i)} \eta_{k-l} P(x_{k-l}, \tilde{g}_{k-l}, \eta_{k-l}) }^2.
\end{align*}
Therefore, we have
\begin{equation*}
\begin{split}
    \mathbb{E}[\norm{g_k-\tilde{g}_k}^2] &\leq \frac{L^2}{N} \sum_{i=1}^N \norm{x_k - x_{k-\tau(k,i)}}^2 \nonumber \\
    &\leq \frac{L^2}{N} \sum_{i=1}^N \left( \frac{2\tau(k,i)}{N} \sigma^2 \sum_{l=1}^{\tau(k,i)}\eta_{k-l}^2 + 2\tau(k,i)\sum_{l=1}^{\tau(k,i)}\eta_{k-l}^2 \norm{ P(x_{k-l}, \tilde{g}_{k-l}, \eta_{k-l}) }^2 \right) \nonumber \\
    &\leq \left( \frac{2L^2T}{N} \sum_{l=1}^{T}\eta_{k-l}^2 \right) \sigma^2 + 2L^2T\sum_{l=1}^{T}\eta_{k-l}^2 \norm{ P(x_{k-l}, \tilde{g}_{k-l}, \eta_{k-l}) }^2,
\end{split}
\end{equation*}
where the last inequality follows from and now we prove the lemma.
\end{proof}

\end{document}